\documentclass[hidelinks,journal,comsoc]{IEEEtran}
\usepackage{amsmath,amsfonts,amssymb}
\usepackage{amsthm}
\usepackage{array}
\usepackage[caption=false,font=normalsize,labelfont=sf,textfont=sf]{subfig}
\usepackage{textcomp}
\usepackage{stfloats}
\usepackage{url}
\usepackage{verbatim}
\usepackage{graphicx}
\usepackage{cite}
\usepackage{booktabs}
\usepackage{multirow}
\usepackage{bm}

\newtheorem{theorem}{Theorem}
\newtheorem{lemma}{Lemma}
\newtheorem{proposition}{Proposition}
\newtheorem{corollary}{Corollary}
\newtheorem{definition}{Definition}

\begin{document}

\title{Cognitive Graph Intelligence for Adaptive and Robust DDoS Attack Detection in Next Generation Networks}

\author{Mohammad Arif Hossain, ~\IEEEmembership{Member,~IEEE,} 
Yeahia Sarker, ~\IEEEmembership{Member,~IEEE}, Md Jafrin Hossain, Most. Humayra Khanom Rime, and
Nirwan Ansari,~\IEEEmembership{Life Fellow,~IEEE,}
        
\thanks{M. A. Hossain, Y. Sarker, and M. H. K. Rime are with Middle Tennessee State University, USA, M. J. Hossain is with C2C Tech, Germany, and N. Ansari is with New Jersey Institute of Technology, USA. Corresponding author: Mohammad Arif Hossain (e-mail: mohammad.hossain@mtsu.edu).}}

\markboth{IEEE Transactions on Cognitive Communications and Networking}%
{Cognitive Graph Intelligence for Adaptive and Robust DDoS Attack Detection}

\maketitle

\begin{abstract}
Distributed Denial-of-Service (DDoS) attacks threaten network availability, requiring a cognitive detection process that senses traffic, infers intent, and supports an adaptive response under severe class imbalance and non-stationary conditions. This paper proposes a Graph-based Generative Adversarial Network (GraphGAN) that serves as the cognitive detection engine for this task. GraphGAN captures the relational structure among traffic flows while addressing imbalance through adversarial generation of synthetic samples. Sequential flows are converted into $k$-nearest neighbor graphs using sliding windows to preserve feature-similarity and temporal dependencies among flows. The generator learns the distribution of DDoS attacks to synthesize realistic minority samples, while a Graph Convolutional Network (GCN)-based discriminator distinguishes real from synthetic graph data. A separate GCN classifier, trained on the balanced dataset, performs the final detection decision. Evaluations on four benchmark datasets show that GraphGAN achieves superior accuracy, precision, and recall compared to state-of-the-art approaches, particularly in data-scarce scenarios. By integrating temporal graph construction, adversarial augmentation, and GCN classification, GraphGAN effectively models coordinated attack behaviors and mitigates class imbalance, providing a robust and topology-aware solution for intrusion detection in data-constrained environments.
\end{abstract}

\begin{IEEEkeywords}
Cognitive networking, DDoS resilience, graph neural networks, generative adversarial networks, class imbalance, and network intrusion detection systems.
\end{IEEEkeywords}

\section{Introduction}\label{intro}
\IEEEPARstart{D}{istributed} Denial-of-Service (DDoS) attacks have emerged as one of the most persistent and devastating threats in modern cybersecurity, causing billions of dollars in annual damages and disrupting critical infrastructure worldwide \cite{Wang_DDoS}. Unlike targeted intrusions, DDoS attacks exploit the distributed nature of compromised hosts to overwhelm services, making real-time mitigation extremely challenging. The scale and sophistication of these attacks continue to escalate: botnets such as Mirai have demonstrated the ability to mobilize millions of compromised IoT devices into coordinated flooding campaigns \cite{antonakakis2017understanding, liu_DT}, while modern multi-vector attacks combine volumetric, protocol, and application-layer techniques to evade single-point defenses \cite{zargar2013survey}. These developments highlight the inadequacy of static, rule-based protection mechanisms and motivate the need for intelligent, adaptive detection frameworks.

Traditional network intrusion detection systems (NIDS) rely primarily on signature-based rules and statistical anomaly detection, both of which struggle to adapt to rapidly evolving attack vectors \cite{jaafar2019review}. Signature-based systems require prior knowledge of attack patterns and are inherently reactive, limiting their ability to detect zero-day or polymorphic variants. Statistical anomaly detectors, while more general, often suffer from high false-positive rates and limited capability to model complex, non-linear relationships in high-dimensional traffic data \cite{hossain2023hybrid, hossain2023decentralized}. Meanwhile, the exponential growth in traffic volume and complexity has rendered conventional flow-based analysis insufficient, as processing flows independently fails to capture the interconnected and coordinated nature of modern network communications \cite{Luo_IDS, Liu_RL, Hossain_SL}.

With the proliferation of cloud computing, IoT ecosystems, and high-speed 5G/6G networks, network defense increasingly relies on cognitive, learning-driven detection that can observe traffic, reason about threats, and support adaptive mitigation across heterogeneous environments \cite{Ansari_Zerotrust}. In this setting, an intrusion detector is most useful as the learning core of a sense--infer--act process rather than a standalone classifier. Deep learning has shown promise for automated feature extraction \cite{Sayed_DDoS}; however, most methods treat traffic flows as independent feature vectors, discarding the relational structure that a cognitive detector could exploit. Furthermore, severe class imbalance, where benign traffic vastly outnumbers attack instances, degrades sensitivity to minority intrusion classes and remains a persistent challenge \cite{Shahraki_Net_Traffic}. Addressing structural modeling and distributional imbalance together is therefore a prerequisite for adaptive, robust detection and motivates the framework we develop here.

\subsection{Motivation}
The growing complexity of DDoS attacks and the limitations of traditional detection approaches necessitate new methods that capture the structural and distributional properties of modern network data. We highlight three key motivations for our graph-based adversarial framework.

\smallskip
\noindent\textbf{Non-Euclidean Structure in Network Traffic:} Network communications inherently exhibit graph-structured relationships through temporal sequences, source-destination interactions, and feature similarities that are often overlooked by conventional approaches \cite{bakar2024ftg}. Treating flows as independent samples discards relational information essential for recognizing coordinated and distributed attacks. Graph neural networks have recently demonstrated strong potential in network security by modeling such dependencies \cite{zhong2024survey}; however, their integration with adversarial augmentation remains largely unexplored. Our $k$-nearest neighbor ($k$-NN) graph construction captures flow interdependencies by forming temporal subgraphs that link flows by feature similarity within each time window, preserving both feature-space proximity and temporal locality.

\smallskip
\noindent\textbf{Real-World Class Imbalance:} Network datasets are highly imbalanced, with benign traffic overwhelmingly dominating attack instances, significantly reducing model sensitivity to minority intrusion classes \cite{qing2024mitigating}. Conventional interpolation-based oversampling techniques, such as SMOTE, fail to capture the nonlinear and topological dependencies inherent in high-dimensional traffic data \cite{chawla2002smote}. Although recent GAN-based augmentation methods have improved the generation of synthetic attacks \cite{zhao2023research}, they operate on vectorized representations and produce isolated samples without graph-structural context. To address this limitation, our adversarial framework employs a generator trained against a graph-based discriminator to synthesize realistic minority samples that preserve both statistical distributions and relational fidelity across network nodes, thereby enhancing robustness and detection performance.

\smallskip
\noindent\textbf{Topology-Aware Adversarial Training:} Many generative models synthesize isolated samples without incorporating relational context, resulting in data that may be statistically plausible yet topologically inconsistent \cite{arafah2025anomaly}. Even advanced generative architectures such as Wasserstein GANs \cite{arafah2025anomaly} and VAE-GAN hybrids \cite{tian2024vae} generate point-wise samples that neglect inter-flow dependencies. Our framework integrates graph neural networks (GNNs) with adversarial training to ensure that synthetic DDoS samples reflect both intrinsic flow characteristics and realistic relationships with neighboring flows, producing topology-consistent augmentation that existing generative approaches cannot achieve.

\subsection{Literature Review}
DDoS attacks have evolved from volumetric floods to multi-vector campaigns, exposing the limitations of traditional detection methods that treat flows independently and ignore relational structures \cite{zhong2024survey}. While machine learning improves on rule-based systems, most approaches rely on vectorized traffic representations, which miss topological patterns critical to distributed attacks. Combined with severe class imbalance as benign traffic dominates, this gap reduces detection accuracy and minority-class sensitivity.

\smallskip
\noindent\textbf{Feature Learning-based DDoS Detection.}
Chouhan {\itshape et al.} \cite{chouhan2023framework} proposed a non-linear kernel-based support vector machine for classifying DDoS attacks in software-defined networking (SDN). This lightweight model can detect live attacks but requires manual feature extraction. An LSTM-autoencoder-based approach \cite{el2022flow} reduces feature overhead and false alarms by addressing irrelevant features and non-representative SDN datasets, yet it still overlooks the inherent class-imbalance problem. To capture temporal information, a CNN-LSTM hybrid was introduced by Rajan and Aravindhar \cite{rajan2023detection}. On the ensemble learning front, He {\itshape et al.} \cite{he2024efficient} proposed a feature-weighted random forest for application-layer DDoS defense, achieving high detection rates but remaining limited to hand-crafted feature spaces. Yang {\itshape et al.} \cite{yang2024improved} combined attention mechanisms with BiGRU and Inception-CNN for IIoT intrusion detection, using mixed sampling to address class imbalance; however, the model still processes flows independently without structural context. Federated learning has also emerged as a promising direction: Li {\itshape et al.} \cite{li2021fleam} proposed an iterative model averaging (IMA) based gated recurrent unit (GRU) protocol at the fog/edge layer for collaborative and privacy-preserving DDoS mitigation in Industrial IoT networks. Despite these advances, the aforementioned methods fail to capture relational interdependencies within network traffic.

\smallskip
\noindent\textbf{Graph-based DDoS Attack Classification.} Graph networks have shown significant performance improvements in modeling data relationships by converting datasets into node-edge structures \cite{Nishanth_DDoS}. A comprehensive survey by Zhong {\itshape et al.} \cite{zhong2024survey} systematically categorized GNN-based IDS methods across graph construction strategies and deployment paradigms, identifying scalability and adversarial robustness as open challenges. Li {\itshape et al.} \cite{li2022graphddos} proposed a vanilla graph network to exploit relationships among packet attributes, time intervals, and device characteristics. A spatio-temporal GCN \cite{cao2021detecting} improved SDN-based DDoS detection by incorporating temporal information, though class imbalance remained unaddressed. Bakar {\itshape et al.} \cite{bakar2024ftg} proposed a similar approach combining multiple graph networks to extract features from non-Euclidean spaces. Lo {\itshape et al.} \cite{lo2022graphsage} introduced E-GraphSAGE, an edge-feature-aware extension of GraphSAGE that represents flows as graph edges with rich feature vectors, enabling joint learning from host and flow attributes for IoT intrusion detection. Wang {\itshape et al.} \cite{wang2025bs} proposed BS-GAT, a graph attention network that constructs behavioral similarity graphs with edge weights incorporated into the attention mechanism, achieving over 99\% accuracy in edge computing environments. Duan {\itshape et al.} \cite{duan2022application} proposed a graph model in a semi-supervised setting to mitigate data scarcity in DDoS detection. Although this approach improves detection performance, it still struggles with class imbalance.

\smallskip
\noindent\textbf{Adversarial Training for DDoS Detection.} Recent advances in deep generative models have demonstrated the potential of adversarial training for DDoS detection \cite{mustapha2023detecting}. Shieh {\itshape et al.} \cite{shieh2022detection} proposed a dual-discriminator strategy for detecting adversarial DDoS traffic in SDN. A GAN model \cite{mustapha2023detecting} was introduced to generate synthetic data for highly imbalanced datasets, improving classification accuracy by reframing DDoS detection as a supervised learning task. Zhao {\itshape et al.} \cite{zhao2023research} employed a conditional GAN to oversample rare attack categories, preventing category omission in heavily skewed datasets through label-consistent synthetic generation. Arafah {\itshape et al.} \cite{arafah2025anomaly} combined a denoising autoencoder with a Wasserstein GAN, using the WGAN's Lipschitz-constrained training to synthesize more stable and realistic minority-class traffic. Tian {\itshape et al.} \cite{tian2024vae} extended this line by integrating a variational autoencoder with an auxiliary-classifier WGAN-GP, thereby improving both the fidelity and the class-conditioned diversity of generated intrusion samples. Despite these advances, none of these generative approaches operate on graph-structured data, producing isolated synthetic samples that lack topological context.

\smallskip
\noindent\textbf{Learning-Driven Cognitive Network Defense.} A parallel line of work embeds learning inside the network control plane so that defense adapts online rather than through static rules. Collaborative and decentralized learning has been explored for 5G+ core and edge environments \cite{hossain2023decentralized}, and meta- and resource-aware learning has been applied to offloading and control in IoT networks \cite{hossain_metaRL}. Federated designs push detection to the fog/edge layer for privacy-preserving, in-network mitigation \cite{hossain_metaRL}, while flow-level detectors have been deployed at SDN controllers to close the loop between sensing and actuation \cite{el2022flow}. These efforts frame intrusion detection as one stage of a cognitive networking pipeline, but they largely operate on vectorized traffic and do not exploit the relational topology of coordinated attacks. GraphGAN targets exactly this gap: it supplies a topology-aware cognitive engine whose output can drive an edge/SDN mitigation decision, uniting structural modeling with the adaptive control that cognitive communication networks require.

\subsection{Contributions}
To address these challenges, we propose a Graph-based Generative Adversarial Network (GraphGAN), a novel framework that unifies structure-aware modeling with generative augmentation for intelligent DDoS detection. Unlike prior work, our approach treats traffic as inherently relational and leverages adversarial training to mitigate class imbalance while preserving topological integrity. The main contributions are summarized as follows:
\begin{itemize}
    \item A structure-aware graph construction strategy using $k$-NN connectivity over sliding temporal windows to capture flow co-occurrence and feature-similarity patterns within temporal neighborhoods that are ignored by vector-based detection approaches.
    \item An adversarial graph neural architecture integrating GCNs with generative training, where the discriminator processes entire graph structures, enabling topology-conditioned synthetic sample generation over fixed real topology templates.
    \item An imbalance-aware generative mechanism that produces realistic minority-class DDoS samples while preserving both statistical fidelity and relational consistency within the graph.
    \item A detection framework based on GCN classification that exploits structural flow relationships to identify coordinated attack patterns.
\end{itemize}

The remainder of this manuscript is organized as follows. Section~\ref{method} introduces the proposed GraphGAN framework. Section~\ref{results} presents the experimental results and performance evaluation. Section~\ref{conclusion} includes the concluding remarks.

\section{Methodology}\label{method}
The proposed GraphGAN integrates graph neural networks with adversarial training to address class imbalance while modeling topological relationships in network traffic. The framework is illustrated in Fig.~\ref{fig:graph_gan}.

\subsection{Cognitive Detection Process}
\label{subsec:sysmodel}
We frame GraphGAN as the cognitive detection stage of a \emph{sense--infer--act} process that a network controller (e.g., an SDN controller) can use to maintain service availability under attack. In the \emph{sense} stage, per-flow statistics $\mathbf{x}_i \in \mathbb{R}^d$ are exported from switches or gateways (e.g., via telemetry or flow-export) and streamed to the detector. In the \emph{infer} stage, the proposed GraphGAN pipeline converts a sliding window of flows into a temporal graph $\mathcal{G}_t$ and produces a subgraph-level decision $P(y\mid\mathcal{G}_t)$; adversarial augmentation supplies the minority-class fidelity that keeps this decision reliable under the imbalance typical of live traffic. In the \emph{act} stage, a decision of ``DDoS'' can trigger a mitigation policy (e.g., rate limiting or flow-rule installation), while benign decisions leave forwarding untouched.

Two properties make this framing attractive. First, the detector consumes only exported flow features, so it imposes no per-packet inference cost on the forwarding path. Second, because the decision is made at the granularity of a temporal window rather than a single flow, detection reacts to \emph{coordinated} behavior---the regime in which distributed attacks are distinguishable (Theorem~\ref{thm:advantage})---rather than to isolated flows.

\subsection{Graph Construction from Network Traffic}
Our graph construction methodology transforms sequential network flow data into graph-structured representations that preserve both feature-level similarities and temporal dependencies.

\subsubsection{Feature Space Representation and Normalization}
Let $\mathcal{D}=\{(\mathbf{x}_i,y_i)\}_{i=1}^{N}$ denote the traffic dataset, where $\mathbf{x}_i \in \mathbb{R}^{d}$ is the $d$-dimensional feature vector of the $i$-th flow and $y_i \in \{0,1\}$ is the binary label (benign or DDoS). Each $\mathbf{x}_i$ consists of flow-level statistics, including packet size, inter-arrival time, flow duration, and protocol-related features. To avoid dominance of large-magnitude features and, crucially, to confine every feature to a bounded range that the generator can reproduce, we standardize each feature to the interval $[-1,1]$ via min--max normalization,
\begin{equation}\label{eq:standardize}
\tilde{\mathbf{x}}_i = 2\,\frac{\mathbf{x}_i - \mathbf{x}_{\min}}{\mathbf{x}_{\max} - \mathbf{x}_{\min}} - \mathbf{1},
\end{equation}
where $\mathbf{x}_{\min}$ and $\mathbf{x}_{\max}$ are the element-wise minimum and maximum computed over the training dataset and all operations are applied element-wise. Beyond balancing feature contributions in graph construction, bounding features to $[-1,1]$ aligns their support with the codomain of the generator's $\tanh$ output, a property we exploit to keep the adversarial distribution-matching objective (Theorem~\ref{thm:gan_convergence}) well-posed.

\subsubsection{Temporal Graph Construction via Sliding Windows}
Network attacks, particularly DDoS attacks, exhibit temporal patterns crucial for accurate detection. To capture these patterns while maintaining computational tractability, we employ a sliding-window mechanism that partitions sequential network flows into temporal subgraphs. For window size $w$ and step size $s$, we define temporal subgraphs
$\mathcal{G}_t = (\mathcal{V}_t, \mathcal{E}_t, \mathbf{X}_t),$
where $\mathcal{V}_t = \{v_{t,1}, \ldots, v_{t,w}\}$ is the vertex set of $w$ consecutive flows, $\mathcal{E}_t$ represents the edge set encoding relational dependencies among flows, and $\mathbf{X}_t \in \mathbb{R}^{w \times d}$ is the standardized feature matrix. The sliding window ensures temporal locality, and overlapping windows ($s < w$) capture transitional patterns for robust representation learning. Because overlapping windows share raw flows, windows are never allowed to span across data partitions: the chronological splitting protocol used in our experiments (Section~\ref{results}) partitions the flow stream before windowing, preventing any flow from leaking across the training, validation, and test sets.

\begin{figure*}[!t]
    \centering
    \includegraphics[width=0.8\textwidth]{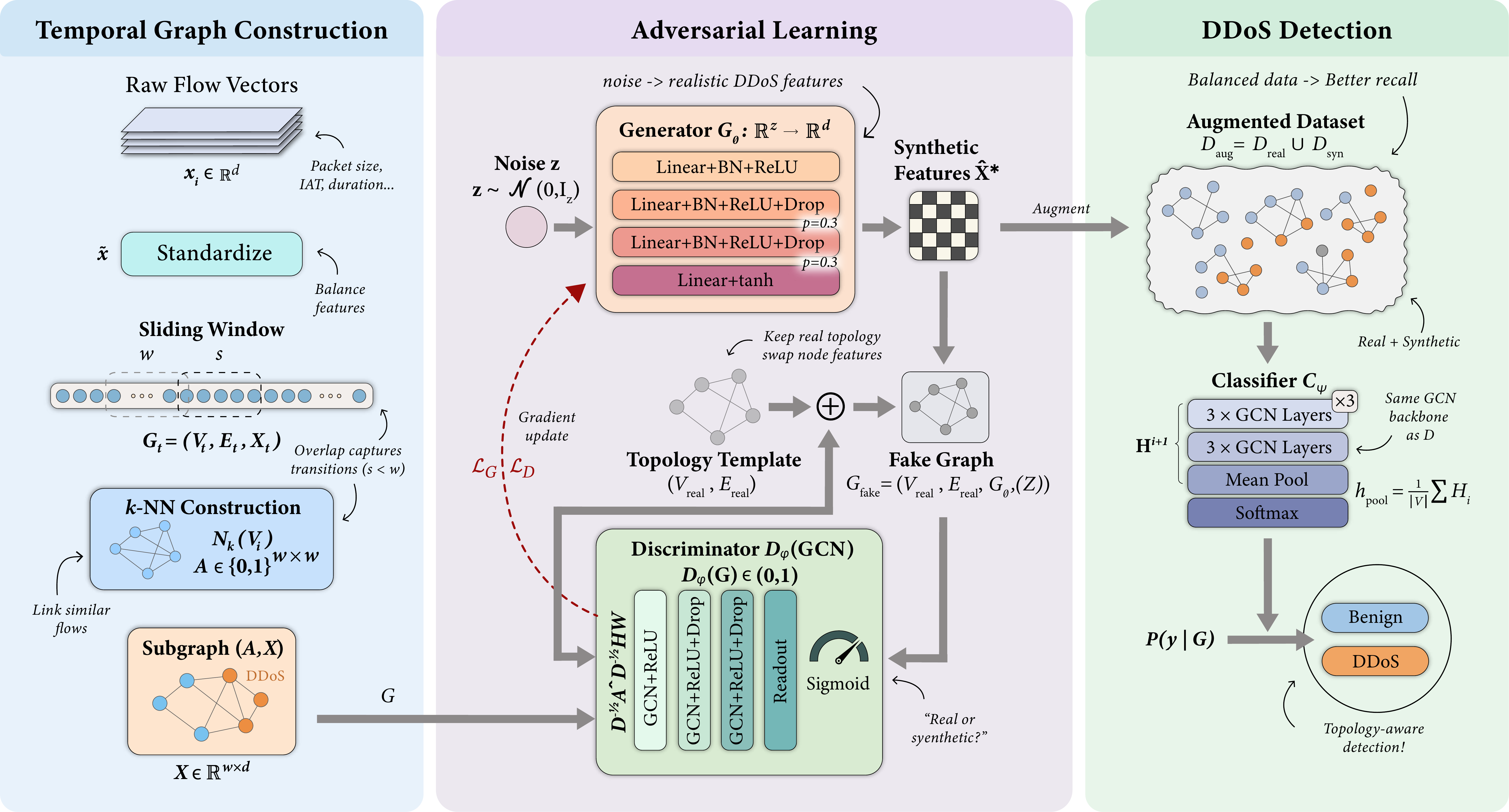}
    \caption{Overview of the proposed GraphGAN framework. (Left) Raw flow vectors are segmented using sliding windows and transformed into $k$-NN subgraphs. (Center) The adversarial training loop optimizes a generator to synthesize realistic node features over fixed topology templates, while a GCN-based discriminator distinguishes real from synthetic graphs. (Right) A GCN classifier operates on the augmented dataset to perform binary DDoS classification.}
    \label{fig:graph_gan}
\end{figure*}

\subsubsection{\textit{k}-Nearest Neighbor Graph Construction}
Constructing meaningful graph topology is critical for capturing the relational structure inherent in network traffic. We employ $k$-nearest neighbor ($k$-NN) graph construction based on feature similarity, creating edges between network flows with similar characteristics. The $k$-NN graph construction begins by computing the pairwise Euclidean distance matrix for all flows within each temporal window:
\begin{equation}
D_{ij}^{(t)} = \|\tilde{\mathbf{x}}_{t,i} - \tilde{\mathbf{x}}_{t,j}\|_2 = \sqrt{\sum_{\ell=1}^{d}(\tilde{x}_{t,i,\ell} - \tilde{x}_{t,j,\ell})^2}
\end{equation}

where $\tilde{\mathbf{x}}_{t,i}$ represents the standardized feature vector for the $i$-th node in temporal window $t$. The Euclidean distance metric is chosen for its interpretability and computational efficiency, though other distance metrics could be employed depending on the feature characteristics. For each node $v_i$ in the temporal window, we identify its $k$ nearest neighbors:
\begin{equation}
\mathcal{N}_k(v_i) = \{v_j : j \in \text{argmin}_{j \neq i}^k D_{ij}^{(t)}\}
\end{equation}

This operation selects the $k$ nodes most similar to node $v_i$ in the feature space, excluding $ v_i$ itself. The parameter $k$ controls the graph's connectivity and must be chosen to balance capturing local neighborhoods (small $k$) and ensuring connectivity (large $k$). The adjacency matrix $\mathbf{A}_t \in \{0,1\}^{w \times w}$ is then constructed to create an undirected graph:
\begin{equation}\label{eq:adj}
A_{ij}^{(t)} = \begin{cases}
1, & \text{if } v_j \in \mathcal{N}_k(v_i) \text{ or } v_i \in \mathcal{N}_k(v_j), \\
0, & \text{otherwise}.
\end{cases}
\end{equation}

The symmetric construction ensures that if node $v_i$ considers $v_j$ a neighbor, then $v_j$ also considers $v_i$ a neighbor, yielding an undirected graph that better captures bidirectional relationships in network communications. Two important structural properties follow directly from this symmetric $k$-NN construction: the adjacency matrix is guaranteed to be symmetric, and every node has a degree bounded between $k$ and $w{-}1$. We formalize these properties below.

\begin{lemma}[Symmetry and Bounded Degree of $k$-NN Graph]
\label{lem:knn_properties}
The adjacency matrix $\mathbf{A}_t$ constructed via symmetric $k$-NN satisfies: (i) $\mathbf{A}_t = \mathbf{A}_t^\top$ (symmetry), and (ii) for every node $v_i$, the degree $\deg(v_i) = \sum_{j} A_{ij}^{(t)}$ satisfies $k \leq \deg(v_i) \leq w-1$, where $w$ is the window size.
\end{lemma}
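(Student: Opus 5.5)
Both parts follow directly from the definition of $A^{(t)}_{ij}$ in~\eqref{eq:adj}, together with the fact that $\mathcal{N}_k(v_i)$ contains exactly $k$ distinct nodes other than $v_i$. We implicitly need $k \le w-1$; otherwise $\mathcal{N}_k(v_i)$ cannot be formed. Ties in the argmin are assumed broken by a fixed rule, so each $\mathcal{N}_k(v_i)$ has cardinality exactly $k$.

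For symmetry (i), we observe that the condition defining $A^{(t)}_{ij}=1$, namely ``$v_j \in \mathcal{N}_k(v_i)$ or $v_i \in \mathcal{N}_k(v_j)$'', is a disjunction. Swapping $i$ and $j$ only permutes its two disjuncts, so the truth value is invariant. Hence $A^{(t)}_{ij}=A^{(t)}_{ji}$ for all $i,j$, which gives $\mathbf{A}_t=\mathbf{A}_t^\top$.

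For the degree bounds (ii), we start with the diagonal. Since $\mathcal{N}_k(v_i)$ excludes $v_i$ by construction, neither disjunct holds when $j=i$, so $A^{(t)}_{ii}=0$. For the upper bound, $\deg(v_i)=\sum_j A^{(t)}_{ij}$ is then a sum of at most $w-1$ entries, each in $\{0,1\}$, so $\deg(v_i)\le w-1$. For the lower bound, every $v_j\in\mathcal{N}_k(v_i)$ satisfies the first disjunct, so $A^{(t)}_{ij}=1$. Because these $k$ nodes are distinct and different from $v_i$, we get $\deg(v_i)\ge |\mathcal{N}_k(v_i)| = k$. The reverse edges (where $v_i\in\mathcal{N}_k(v_j)$) can only increase the degree, which explains why the degree may exceed $k$ but never falls below it.

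The only delicate point is the well-definedness of $|\mathcal{N}_k(v_i)|=k$. This requires $k\le w-1$ and a deterministic tie-breaking rule when distances coincide, for example duplicate flows with $D^{(t)}_{ij}=0$. I would state both explicitly as standing assumptions at the start of the proof, since the lower bound relies on them.
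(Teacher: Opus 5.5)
Your proposal is correct and follows the paper's own argument. Symmetry comes from the symmetric disjunction in Eq.~\eqref{eq:adj}, the lower bound $\deg(v_i)\ge k$ comes from the $k$ neighbors that $v_i$ itself selects, and the upper bound $w-1$ comes from there being $w$ nodes and no self-loops; your explicit assumptions $k\le w-1$ and deterministic tie-breaking state precisely what the paper's proof leaves implicit.
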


\emph{Proof}
Property (i) follows directly from the construction rule in Eq.~\eqref{eq:adj}: $A_{ij}^{(t)} = 1$ if and only if $v_j \in \mathcal{N}_k(v_i)$ \emph{or} $v_i \in \mathcal{N}_k(v_j)$, which is symmetric by definition. For property (ii), each node $v_i$ selects exactly $k$ neighbors, so $\deg(v_i) \geq k$. Since the graph contains $w$ nodes and self-loops are excluded, the degree is bounded above by $w-1$.

\noindent This bounded-degree property ensures that each node maintains at least $k$ connections while remaining constrained by the finite window size $w$, preventing degenerate graph structures and providing a well-conditioned topology for subsequent GCN processing.

\subsubsection{Graph Label Assignment}
Each temporal subgraph requires a label for supervised learning. Since individual network flows within a window may have different labels, we employ majority voting to assign a single label to the entire subgraph as:

\begin{equation}\label{eq:label}
y_t = \text{argmax}_{c \in \{0,1\}} \sum_{i=1}^{w} \mathbb{I}(y_{t,i} = c),
\end{equation}
where $\mathbb{I}(\cdot)$ is the indicator function that returns 1 if the condition is true and 0 otherwise, and $y_{t,i}$ is the label of the $i$-th flow in window $t$. For even window sizes, an exact tie between the two classes is possible; we break such ties toward the attack (DDoS) class, a security-conservative choice that favors recall over precision. This approach assumes that the majority class within a temporal window reflects the overall behavior pattern, which is reasonable for DDoS attacks, which typically involve sustained malicious activity across multiple flows. A natural question is whether this majority-voting scheme reliably assigns labels for the window sizes used in our experiments, including the even default $w=30$. The following lemma confirms that the assigned label is always supported by at least a simple majority, which strengthens to a strict majority for odd window sizes.

\begin{lemma}[Label Consistency of Majority Voting]
\label{lem:label_consistency}
Let $\alpha_t = \frac{1}{w}\sum_{i=1}^{w} \mathbb{I}(y_{t,i} = y_t)$ denote the fraction of flows in window $t$ agreeing with the label $y_t$ assigned by Eq.~\eqref{eq:label}, with ties broken toward the attack class. Then, $\alpha_t \geq \frac{1}{2}$ for any window size $w$. Moreover, when $w$ is odd, tie is not allowed and the bound strengthens to a strict majority, $\alpha_t \geq \frac{1}{2} + \frac{1}{2w}$. Under temporal locality, where DDoS flows form contiguous segments so that most windows lie within a single regime, $\alpha_t \to 1$; the worst case $\alpha_t = \frac{1}{2}$ arises only at an exact class split within a window.
\end{lemma}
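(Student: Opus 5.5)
The plan is a direct counting argument on the two class counts in window $t$. Let $n_1 = \sum_{i=1}^{w}\mathbb{I}(y_{t,i}=1)$ and $n_0 = w - n_1$. By Eq.~\eqref{eq:label}, $y_t$ is the class attaining $\max\{n_0,n_1\}$, with ties resolved to $y_t=1$. Either way, the count of the chosen class equals $\max\{n_0,n_1\}$. Therefore
\begin{equation*}
\alpha_t = \frac{\max\{n_0,n_1\}}{w}.
\end{equation*}
The first bound follows because the maximum of two nonnegative numbers is at least their average: $\max\{n_0,n_1\} \geq \frac{n_0+n_1}{2} = \frac{w}{2}$, so $\alpha_t \geq \frac12$. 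The tie-breaking rule plays no role in this inequality. It only guarantees that $y_t$ is well defined when $n_0=n_1$, and in that case $\alpha_t = \frac12$ exactly. This already settles the worst-case claim: $\alpha_t=\frac12$ holds if and only if $n_0=n_1=\frac{w}{2}$, which is the exact class split and requires $w$ even.

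For odd $w$, I would use integrality. Since $n_0+n_1=w$ is odd, $n_0\neq n_1$, so no tie can occur. The larger count is an integer strictly greater than $\frac{w}{2}$, and the smallest such integer is $\frac{w+1}{2}$. Hence $\max\{n_0,n_1\}\geq \frac{w+1}{2}$, which gives $\alpha_t \geq \frac12+\frac{1}{2w}$.

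The main obstacle is the informal claim that $\alpha_t\to 1$ under temporal locality, because it needs a model before it can be proved. I would formalize it as follows. Suppose labels form contiguous segments of each class, each of length at least $L$, with $L \geq w$. Then any window of $w$ consecutive flows contains at most one label boundary. A window with no boundary gives $\alpha_t=1$. For a window containing a boundary, I would argue that among all windows the fraction containing a boundary is at most about $w/L$. So the average of $\alpha_t$ over windows is at least $1-O(w/L)$, which tends to $1$ as $L/w\to\infty$. I would state this as an asymptotic remark rather than a pointwise guarantee, since boundary windows can still attain $\alpha_t=\frac12$. This is consistent with the statement's note that the worst case occurs only at an exact split.
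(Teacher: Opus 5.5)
Your proposal is correct and uses essentially the same counting argument as the paper. The winning class count is $\max\{n_0,n_1\}\ge \lceil w/2\rceil$, and for odd $w$ integrality rules out a tie, which forces a count of at least $\frac{w+1}{2}$. Two parts of your proposal go beyond the paper's proof, which leaves the worst-case and temporal-locality clauses unargued: your characterization that $\alpha_t=\frac12$ holds exactly when $n_0=n_1=\frac{w}{2}$, and your segment-length model ($L\ge w$, boundary-window fraction roughly $w/L$) for the temporal-locality clause.
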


\emph{Proof}
For binary labels, one of the two classes must account for at least $\lceil w/2 \rceil$ of the $w$ flows, so the majority-voted winner satisfies $\sum_{i=1}^{w}\mathbb{I}(y_{t,i} = y_t) \geq \lceil w/2 \rceil$; the tie-breaking rule guarantees this even at an exact $w/2$ split. Dividing by $w$ yields $\alpha_t \geq \lceil w/2\rceil / w \geq \frac{1}{2}$. When $w$ is odd, the class counts $\sum_{i}\mathbb{I}(y_{t,i}=1)$ and $w-\sum_{i}\mathbb{I}(y_{t,i}=1)$ have opposite parities and cannot be equal, so tie is not possible and the winning count is at least $\frac{w+1}{2}$. Dividing by $w$ gives $\alpha_t \geq \frac{w+1}{2w} = \frac{1}{2} + \frac{1}{2w}$, which establishes the strict-majority case.

\subsubsection{When Graph Structure Carries Information}
We formalize two properties of DDoS traffic that together establish when graph structure provides a detection advantage over single-flow models. The first says individual flows are \emph{ambiguous}; the second says they are \emph{jointly informative}. Both are necessary for structure to help, and we state them as explicit hypotheses.

\begin{definition}[Feature Ambiguity]
\label{def:ambig}
Flows are \emph{$\eta$-ambiguous} if the optimal single-flow (vector) classifier has irreducible Bayes risk
$R^{\star}_{\mathrm{flow}}
   = \mathbb{E}_{\mathbf{x}}\!\left[\min\!\big(p(y{=}0\mid\mathbf{x}),\,
     p(y{=}1\mid\mathbf{x})\big)\right] = \eta > 0.$
\end{definition}

\begin{definition}[Coordination]
\label{def:coord}
DDoS traffic is \emph{coordinated} if, for flows $i,j$ connected by the $k$-NN rule, the labels retain dependence after conditioning on their features:
$I\!\left(Y_i ; Y_j \mid \mathbf{X}_i, \mathbf{X}_j\right) = c > 0.$
\end{definition}

\noindent Definition~\ref{def:coord} captures the intuition that botnet flows are not independent draws: knowing a neighbor is malicious raises the posterior that a feature-similar flow is malicious too, \emph{beyond} what the features alone reveal. We now show that a neighborhood-aware predictor strictly beats the best single-flow predictor exactly in this regime.

\begin{theorem}[Graph Advantage under Feature-Ambiguous Coordination]
\label{thm:advantage}
Assume flows are $\eta$-ambiguous (Definition~\ref{def:ambig}) and coordinated with strength $c>0$ (Definition~\ref{def:coord}). Let $R^{\star}_{\mathrm{graph}}$ be the Bayes risk of a predictor that observes a node together with its $k$-NN neighborhood within the window. Then,
\[
R^{\star}_{\mathrm{flow}} - R^{\star}_{\mathrm{graph}} \;\ge\; \Phi(c,\bar{m}) \;>\; 0,
\]
where $\bar{m}$ is the expected number of same-class neighbors and $\Phi$ is strictly increasing in both arguments, with $\Phi(0,\cdot)=0$.
\end{theorem}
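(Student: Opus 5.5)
The plan is to compare the two Bayes risks by splitting the gap into a monotone part and a strictly positive part driven by the coordination assumption. I use three tools: the Bayes-risk expression for binary classification, the fact that extra conditioning cannot increase Bayes risk, and a quantitative link between conditional mutual information and the reduction in misclassification probability.

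\emph{Step 1: monotonicity.} Write $R(\mathcal{F}) = \mathbb{E}\big[\min_c P(Y_i{=}c\mid \mathcal{F})\big]$ for the Bayes risk given the information set $\mathcal{F}$. Take $\mathcal{F}_{\mathrm{flow}}=\sigma(\mathbf{X}_i)$. Take $\mathcal{F}_{\mathrm{graph}}$ to be generated by $\mathbf{X}_i$ together with the features of its neighbours and the evidence they carry about their labels. Because $\mathcal{F}_{\mathrm{flow}}\subseteq\mathcal{F}_{\mathrm{graph}}$ and $\min(\cdot,\cdot)$ is concave, Jensen's inequality gives $R^\star_{\mathrm{graph}}\le R^\star_{\mathrm{flow}}=\eta$. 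This settles the nonnegative direction.

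\emph{Step 2: the gap for one neighbour.} Write the gap as
\[
\Delta = \mathbb{E}\big[\,\min(p_i,1-p_i) - \mathbb{E}[\min(q_i,1-q_i)\mid \mathbf{X}_i]\,\big],
\]
where $p_i = P(Y_i{=}1\mid\mathbf{X}_i)$ and $q_i$ is the posterior after adding a neighbour $j$.
\begin{itemize}
\item The function $\phi(p)=\min(p,1-p)$ is concave. Its Jensen gap under a mean-preserving spread of the posterior is bounded below in terms of how far $q_i$ moves from $p_i$.
\item That movement is controlled by $I(Y_i;\,\cdot\mid\mathbf{X}_i)$, which in turn is bounded below by $I(Y_i;Y_j\mid\mathbf{X}_i,\mathbf{X}_j)=c$ whenever the neighbour's label is (partially) revealed through the neighbourhood.
\item Pinsker's inequality converts this: the expected squared total-variation displacement of the posterior is at least of order $c$.
\item To turn displacement into a strict drop in $\phi$, the posterior must move across the decision boundary $1/2$ with positive probability. $\eta$-ambiguity is used exactly here: $\eta>0$ forces $P(p_i\in(\tfrac12-\delta,\tfrac12+\delta))>0$, so mass sits near the boundary and a spread of size $\gtrsim\sqrt{c}$ flips decisions there.
\end{itemize}
This yields a single-neighbour gain $\phi_1(c)>0$ with $\phi_1(0)=0$.

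\emph{Step 3: aggregation over neighbours.} With $\bar m$ expected same-class neighbours, I would treat the neighbour signals as conditionally independent noisy votes, each of informativeness $c$. Via a Chernoff-type or majority-vote bound, the combined posterior displacement grows with $\bar m$, for instance $1-e^{-\bar m\,g(c)}$ for some increasing $g$ with $g(0)=0$. This gives
\[
\Phi(c,\bar m)=\kappa(\eta)\big(1-e^{-\bar m\,g(c)}\big),
\]
which is strictly increasing in both arguments and vanishes at $c=0$. The lower bound on $\bar m$ can come from Lemma~\ref{lem:knn_properties}: $\deg\ge k$, combined with Lemma~\ref{lem:label_consistency} to ensure that a fraction at least $\alpha_t\ge 1/2$ of the neighbourhood is same-class.

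\emph{Main obstacle.} Step 2 is the hard part. Mutual information alone does not force a decrease in 0--1 risk: the posterior could move without crossing $1/2$. The ambiguity hypothesis must be strengthened to a mass-near-boundary condition, which I would try to derive from $\eta>0$ plus a mild regularity (positive density of $p_i$ near $1/2$). Failing that, I would state $\Phi$ with a constant depending on that mass.

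Step 3 also needs an explicit conditional-independence assumption among neighbour votes to get the monotone dependence on $\bar m$, and I would add it as a hypothesis.
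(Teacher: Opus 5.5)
Step~1 is fine. Step~2, however, cannot be completed from the stated hypotheses, and the obstacle you flag is not a technicality: the theorem as stated is false without an extra assumption. Write $\min(r,1-r)=\tfrac12-\tfrac12|2r-1|$ and use $\mathbb{E}[q_i\mid\mathbf{X}_i]=p_i$. This gives the exact identity
\[
R^{\star}_{\mathrm{flow}}-R^{\star}_{\mathrm{graph}}=\mathbb{E}\Big[\,|2q_i-1|\;\mathbb{I}\big\{\mathbb{I}(q_i>\tfrac12)\neq\mathbb{I}(p_i>\tfrac12)\big\}\Big].
\]
So the gain is exactly the expected margin with which the neighbourhood \emph{flips} the flow-only Bayes decision, and neither $\eta>0$ nor $c>0$ forces a flip.

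For a counterexample, take uninformative features and one neighbour with $Y_j\sim\mathrm{Bern}(\tfrac12)$, $P(Y_i{=}1\mid Y_j{=}1)=0.3$ and $P(Y_i{=}1\mid Y_j{=}0)=0.1$. Then $\eta=0.2$, $c=I(Y_i;Y_j)>0$ and $\bar m=0.6$. Yet $q_i\in\{0.1,0.3\}$ never crosses $\tfrac12$, so $R^{\star}_{\mathrm{graph}}=0.2=R^{\star}_{\mathrm{flow}}$. The same happens with $k$ neighbour labels that are conditionally i.i.d.\ given $Y_i$, whenever $P(Y_j{=}1\mid Y_i{=}1)/P(Y_j{=}1\mid Y_i{=}0)<4^{1/k}$. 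So your Step~3 independence hypothesis does not rescue the statement.

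The paper's own proof has the same hole. It sets $Z_i=(Y_j)_{j\in\mathcal{N}_k(v_i)}$, asserts $I(Y_i;Z_i\mid\mathbf{X}_i)\ge c\bar m$, and concludes via Fano. But Fano only lower-bounds the error, so a strict entropy drop lowers a bound, not the risk; the example above has a strict entropy drop and no risk drop. Your diagnosis is therefore sharper than the paper's argument. The correct outcome, though, is a modified theorem with an explicit decision-flip hypothesis. One option is $P\big(\mathbb{I}(q_i>\tfrac12)\neq\mathbb{I}(p_i>\tfrac12)\big)>0$. Another is a local coordination bound on $\{|p_i-\tfrac12|<\delta\}$ large enough that the martingale property $\mathbb{E}[q_i\mid\mathbf{X}_i]=p_i$ forces $q_i$ across $\tfrac12$ there. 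Mass of $p_i$ near $\tfrac12$ does not follow from $\eta>0$ (above, $p_i\equiv0.2$). It would not suffice anyway: $c$ is an average over $\mathbf{X}$ and may be carried entirely by feature values where $p_i$ is far from $\tfrac12$.

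Several intermediate steps also need repair.
\begin{enumerate}
\item The first bullet of Step~2 is false as written. $\min(p,1-p)$ is piecewise linear, so its Jensen gap vanishes for any spread that stays on one side of $\tfrac12$, however large.
\item Pinsker runs the wrong way. $\mathrm{TV}^2\le\tfrac12\mathrm{KL}$ lets mutual information \emph{upper}-bound posterior displacement, not lower-bound it. A lower bound needs the reverse inequality $\mathrm{KL}(\mathrm{Bern}(q)\|\mathrm{Bern}(p))\le (q-p)^2/(p(1-p))$, which requires $p_i$ bounded away from $0$ and $1$.
\item The chain-rule bound $I(Y_i;\mathbf{X}_j,Y_j\mid\mathbf{X}_i)\ge I(Y_i;Y_j\mid\mathbf{X}_i,\mathbf{X}_j)=c$ needs $Y_j$ itself in $\mathcal{F}_{\mathrm{graph}}$. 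With only neighbour features, or \emph{partially revealed} labels, Definition~\ref{def:coord} gives nothing. For instance, features independent of the labels and $Y_i=Y_j=B\sim\mathrm{Bern}(\tfrac12)$ give $c=\log 2$ and $\eta=\tfrac12$, yet a features-only neighbourhood predictor still has risk $\tfrac12$. You must therefore commit, as the paper does, to observed neighbour labels.
\item Lemma~\ref{lem:label_consistency} controls the fraction of a window agreeing with the window's majority label, not the fraction of node $i$'s $k$-NN neighbours sharing $Y_i$. A minority-class node may have all its neighbours in the other class, so the lemma gives no lower bound on $\bar m$.
\item The conditional-independence assumption you add in Step~3 is exactly what the paper's additivity claim $I(Y_i;Z_i\mid\mathbf{X}_i)\ge c\bar m$ tacitly requires. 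Definition~\ref{def:coord} is only a pairwise statement, and it also conditions on $\mathbf{X}_j$.
\end{enumerate}
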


\emph{Proof}
The neighborhood supplies side information $Z_i = (Y_{j})_{j\in\mathcal{N}_k(v_i)}$ about $Y_i$. By Definition~\ref{def:coord}, $I(Y_i; Z_i \mid \mathbf{X}_i) \ge c\,\bar m > 0$, so the conditional entropy strictly drops:
$H(Y_i\mid \mathbf{X}_i, Z_i) < H(Y_i\mid \mathbf{X}_i)$.
Fano's inequality lower-bounds any predictor's risk by an increasing function of the conditional entropy of the target; a strict entropy reduction therefore admits a neighborhood predictor whose Bayes risk is strictly smaller, with the gap controlled by the mutual information drop. Setting $\Phi$ to the induced risk decrease gives the claim; $\Phi(0,\cdot)=0$ because $c=0$ removes the side information.

\noindent\textbf{Scope.} The hypothesis $\eta>0$ is essential: if flows are already separable in feature space, then $R^{\star}_{\mathrm{flow}}=0$ and no model can improve on it, so the advantage vanishes. The theorem, therefore, characterizes \emph{when} structure helps (ambiguous yet coordinated traffic) rather than asserting structure is universally required. This is the regime that DDoS occupies, and that motivates the entire framework.

The advantage above lies in the dependence between features and the topology they induce. Because the $k$-NN adjacency is a deterministic function of the window's features, this dependence is real and, crucially, invisible to any feature-only model.

\begin{proposition}[Feature--Topology Information Gap]
\label{prop:gap}
Let a vector or vectorized-GAN model act on $\mathbf{X}$ alone, i.e.\ its predictor and generator are measurable with respect to features only. Then, no such model can exploit the feature--topology mutual information $I_{\mathrm{data}}(\mathbf{X};\mathbf{A})$. A model that processes $(\mathbf{X},\mathbf{A})$ jointly, as our GCN does, has access to this term.
\end{proposition}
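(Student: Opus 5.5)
The plan is to make the informal phrase ``exploit $I_{\mathrm{data}}(\mathbf{X};\mathbf{A})$'' precise and then deduce the proposition from the data-processing inequality. I will say a model \emph{exploits} the feature--topology term if its output $M$ (a prediction, or a generated sample) carries information about $\mathbf{A}$ beyond what is already carried through $\mathbf{X}$, that is, if $I(M;\mathbf{A}\mid\mathbf{X})>0$. For a generator, the analogous requirement is that the joint law of $(\mathbf{X},\mathbf{A})$ it induces match the data's joint law, and not only the marginal of $\mathbf{X}$.

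For the negative part, I take $M=f(\mathbf{X},U)$, where $f$ is measurable with respect to the features and $U$ is internal randomness (dropout, GAN noise) independent of $(\mathbf{X},\mathbf{A})$. Then $\mathbf{A}\to\mathbf{X}\to M$ is a Markov chain, so $I(M;\mathbf{A}\mid\mathbf{X})=0$, and the data-processing inequality gives $I(M;\mathbf{A})\le I(\mathbf{X};\mathbf{A})$. Note that with the $k$-NN rule of Eq.~\eqref{eq:adj}, $\mathbf{A}$ is a deterministic function of $\mathbf{X}$, so $I_{\mathrm{data}}(\mathbf{X};\mathbf{A})=H(\mathbf{A})$. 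This is exactly why the statement should be read as ``cannot \emph{explicitly} exploit the term'': a feature-only model that wanted to use topology would have to recompute $\mathbf{A}$ from $\mathbf{X}$ internally, which means re-implementing the graph construction. On the generative side, a vectorized GAN samples flows i.i.d. and has no $\mathbf{A}$ among its outputs. Its objective depends only on the marginal law of $\mathbf{X}$, so two data laws with equal $\mathbf{X}$-marginals but different $I(\mathbf{X};\mathbf{A})$ yield identical training objectives. The model therefore cannot be sensitive to this term. I would make this concrete with a two-window example in which relabelling the coupling between features and graphs changes the mutual information but leaves the feature marginal unchanged.

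For the positive part, it suffices to exhibit dependence. A one-layer GCN computes $\hat{\mathbf{A}}\mathbf{X}\mathbf{W}$ with $\hat{\mathbf{A}}$ the normalized adjacency, which is well defined and nondegenerate because every node has degree at least $k$ by Lemma~\ref{lem:knn_properties}. Taking $\mathbf{W}$ to be the identity restricted to one coordinate and perturbing $\mathbf{A}$ while $\mathbf{X}$ is held fixed changes the output. Hence the GCN's function class contains predictors $M$ with $I(M;\mathbf{A}\mid\mathbf{X})>0$ whenever $\mathbf{A}$ is treated as an input. The same holds for the GCN discriminator, whose objective depends on the joint law of $(\mathbf{X},\mathbf{A})$.

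The main obstacle is the definitional one noted above. Because $\mathbf{A}$ is determined by $\mathbf{X}$, a literal information-theoretic reading makes the claim false, since a sufficiently expressive vector model can recompute the $k$-NN graph. I would therefore state the result relative to the model class: per-flow vector predictors act on a single $\mathbf{x}_i$ and cannot see the other rows of $\mathbf{X}_t$ from which $\mathbf{A}_t$ is built. I would first try to prove that $I(f(\mathbf{x}_i);\mathbf{A}_t\mid\mathbf{x}_i)=0$ for per-flow $f$, which is immediate, and present that as the rigorous content of the proposition.
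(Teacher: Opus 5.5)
Your negative half is correct, but it separates nothing, and your positive half fails under your own definition of ``exploit.'' The $k$-NN rule makes $\mathbf{A}=g(\mathbf{X})$ a deterministic function of the window matrix; you use this yourself when you write $I_{\mathrm{data}}(\mathbf{X};\mathbf{A})=H(\mathbf{A})$. So $H(\mathbf{A}\mid\mathbf{X})=0$, and $I(M;\mathbf{A}\mid\mathbf{X})\le H(\mathbf{A}\mid\mathbf{X})=0$ for \emph{every} $M$, GCN outputs included. Your inference ``perturbing $\mathbf{A}$ with $\mathbf{X}$ fixed changes the output, so $I(M;\mathbf{A}\mid\mathbf{X})>0$'' confuses sensitivity of the map $(\mathbf{X},\mathbf{A})\mapsto M$ with information under $p_{\mathrm{data}}$: those perturbations lie off its support. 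Your planned generative example fails for the same reason. The joint law is the push-forward of the $\mathbf{X}$-law under $\mathbf{X}\mapsto(\mathbf{X},g(\mathbf{X}))$, so equal $\mathbf{X}$-marginals force equal $I(\mathbf{X};\mathbf{A})$. A valid example must instead keep the \emph{per-flow} marginal fixed, since that is all a vectorized GAN sees, while changing the within-window dependence; this changes the law of $g(\mathbf{X})$. Your last paragraph diagnoses the determinism correctly. However, your fallback proves only the trivial per-flow negative statement and leaves the clause ``the GCN has access to this term'' unproved.

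The paper argues through label information instead. Data processing bounds a feature-only predictor by $I(\mathbf{X};Y)$, and the chain rule $I(\mathbf{X},\mathbf{A};Y)=I(\mathbf{X};Y)+I(\mathbf{A};Y\mid\mathbf{X})$ is claimed to give the joint model an extra term in the coordinated regime. That extra term obeys $I(\mathbf{A};Y\mid\mathbf{X})\le H(\mathbf{A}\mid\mathbf{X})=0$ for the same reason, so switching to the paper's route does not close your gap. Two repairs do.

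(a) Per-flow. Let the vector model see only $\mathbf{x}_i$, and prove the positive clause as $I(M_i;\mathbf{A}_t\mid\mathbf{x}_i)>0$ for some GCN node output $M_i$. This needs an explicit nondegeneracy assumption on the law of the remaining rows of $\mathbf{X}_t$ given $\mathbf{x}_i$. Note that the gain then comes from seeing those rows, since $I(\mathbf{X}_t,\mathbf{A}_t;Y)=I(\mathbf{X}_t;Y)$; $\mathbf{A}_t$ acts as an inductive bias, not as extra information.

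(b) Testing. Compare $p_{\mathrm{data}}(\mathbf{X},\mathbf{A})$ with $p_{\mathrm{data}}(\mathbf{X})\,p_{\mathrm{data}}(\mathbf{A})$, which is the fake law at the feature-matched optimum of Theorem~\ref{thm:residual}. Any test measurable in $\mathbf{X}$ alone has no distinguishing power, because the $\mathbf{X}$-marginals coincide. Tests on $(\mathbf{X},\mathbf{A})$ instead face a KL divergence of exactly $I_{\mathrm{data}}(\mathbf{X};\mathbf{A})$. Under the second hypothesis $\mathbf{A}$ is genuinely not a function of $\mathbf{X}$, and this is what gives your remark about the GCN discriminator rigorous content.
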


\emph{Proof}
A feature-only predictor factors through $\mathbf{X}$, so by the data-processing inequality, its accessible information about $Y$ is at most $I(\mathbf{X};Y)$. The joint predictor's accessible information is $I(\mathbf{X},\mathbf{A};Y) = I(\mathbf{X};Y) + I(\mathbf{A};Y\mid\mathbf{X})$, and the second term is non-zero precisely when topology is label-informative given features---the coordinated regime of Definition~\ref{def:coord}.

\noindent The quantity $I_{\mathrm{data}}(\mathbf{X};\mathbf{A})$ reappears as the irreducible residual in the adversarial augmentation analysis (Theorem~\ref{thm:residual}) and connects to the generalization bound of Theorem~\ref{thm:gen}, forming the information-theoretic thread that links graph construction, adversarial training, and classification throughout the framework.

\subsection{Adversarial Learning Framework}
By learning the underlying distribution of minority class samples (DDoS attacks), our GAN-based approach generates realistic synthetic samples that preserve both statistical properties and topological relationships.

\subsubsection{Generator Network Architecture}
The generator network $G_\theta: \mathbb{R}^{z} \rightarrow \mathbb{R}^{d}$ transforms random noise vectors sampled from a prior distribution into synthetic node features that mimic the characteristics of real DDoS traffic. The architecture employs multiple fully connected layers with batch normalization and dropout for regularization. Given a noise vector $\mathbf{z} \sim \mathcal{N}(\mathbf{0}, \mathbf{I}_z)$ sampled from a multivariate standard normal distribution, where $z$ is the noise dimension, the generator produces synthetic features through the following sequence of transformations:
\begin{align}
\mathbf{h}_1 &= \text{ReLU}(\text{BN}(\mathbf{W}_1^G \mathbf{z} + \mathbf{b}_1^G)), \\
\mathbf{h}_2 &= \text{Dropout}(\text{ReLU}(\text{BN}(\mathbf{W}_2^G \mathbf{h}_1 + \mathbf{b}_2^G)), p=0.3), \\
\mathbf{h}_3 &= \text{Dropout}(\text{ReLU}(\text{BN}(\mathbf{W}_3^G \mathbf{h}_2 + \mathbf{b}_3^G)), p=0.3), \\
\hat{\mathbf{x}} &= \tanh(\mathbf{W}_4^G \mathbf{h}_3 + \mathbf{b}_4^G),
\end{align}

where $\mathbf{W}_i^G \in \mathbb{R}^{h_i \times h_{i-1}}$ and $\mathbf{b}_i^G \in \mathbb{R}^{h_i}$ are the weight matrices and bias vectors for the $i$-th layer, respectively, with $h_0 = z$ and $h_4 = d$. The batch normalization operation $\text{BN}(\cdot)$ normalizes the pre-activation values to have zero mean and unit variance, accelerating training and improving stability. The ReLU activation function $\text{ReLU}(x) = \max(0, x)$ introduces non-linearity while maintaining computational efficiency. Dropout with probability $p=0.3$ randomly sets a fraction of input units to zero during training, preventing overfitting. The final activation $\tanh(x) = \frac{e^x - e^{-x}}{e^x + e^{-x}}$ bounds each synthetic feature to $[-1, 1]$, stabilizing adversarial training and, by construction, matching the support of the real node features that were min--max normalized to the same interval in Eq.~\eqref{eq:standardize}. This shared support is essential: were the synthetic and real features confined to different ranges, the discriminator could separate them by feature range alone, and the Jensen--Shannon optimum of Theorem~\ref{thm:gan_convergence} (attained at $p_G = p_{\text{data}}$) could never be reached.

\subsubsection{Graph-based Discriminator Network}
The discriminator network $D_\phi$ employs Graph Convolutional Networks (GCNs) to process graph-structured data and distinguish between real and synthetic samples. Unlike traditional discriminators that operate on individual samples, our graph-based approach considers the entire graph topology, enabling more sophisticated pattern recognition. For a graph $\mathcal{G} = (\mathcal{V}, \mathcal{E}, \mathbf{X})$, each GCN layer performs localized convolution operations that aggregate information from neighboring nodes. The fundamental GCN operation is defined as:
\begin{equation}
\mathbf{H}^{(l+1)} = \sigma\!\left(\hat{\mathbf{D}}^{-\frac{1}{2}}\hat{\mathbf{A}}\hat{\mathbf{D}}^{-\frac{1}{2}}\mathbf{H}^{(l)}\mathbf{W}^{(l)}\right),
\end{equation}

where $\hat{\mathbf{A}} = \mathbf{A} + \mathbf{I}$ is the adjacency matrix with added self-loops to include each node's own features in the aggregation, $\hat{\mathbf{D}}_{ii} = \sum_j \hat{A}_{ij}$ is the corresponding degree matrix, $\mathbf{H}^{(l)} \in \mathbb{R}^{|\mathcal{V}| \times h^{(l)}}$ denotes the node representations at layer $l$ with $\mathbf{H}^{(0)} = \mathbf{X}$, $\mathbf{W}^{(l)} \in \mathbb{R}^{h^{(l)} \times h^{(l+1)}}$ is the trainable weight matrix, and $\sigma(\cdot)$ is the activation function. The normalization term $\hat{\mathbf{D}}^{-\frac{1}{2}}\hat{\mathbf{A}}\hat{\mathbf{D}}^{-\frac{1}{2}}$ ensures degree-normalized feature propagation, preventing high-degree nodes from disproportionately influencing the learned representations. This propagation rule arises as a first-order Chebyshev approximation of spectral graph convolutions \cite{defferrard2016convolutional}.

\subsubsection{Adversarial Training Objective}
Adversarial training follows a two-player minimax game in which the generator attempts to fool the discriminator, while the discriminator tries to correctly identify real and synthetic samples. The minimax objective is formulated as:
\begin{equation}
\min_{\theta} \max_{\phi} \mathcal{L}_{\text{GAN}}(G_\theta, D_\phi),
\end{equation}
where $\theta$ and $\phi$ represent the parameters of the generator and discriminator, respectively. The objective function captures the adversarial nature of the training as
$\mathcal{L}_{\text{GAN}}(G_\theta, D_\phi) = \mathbb{E}_{\mathcal{G} \sim p_{\text{data}}}[\log D_\phi(\mathcal{G})] + \mathbb{E}_{\mathcal{G} \sim p_G}[\log(1 - D_\phi(\mathcal{G}))].$
Here, $p_{\text{data}}$ denotes the distribution of \emph{real minority-class} (DDoS) subgraphs, since the augmentation targets only the under-represented attack class and benign graphs are never synthesized, while $p_G$ is the generator-induced distribution over synthetic DDoS graphs. The first term encourages the discriminator to correctly identify real graphs by maximizing $\log D_\phi(\mathcal{G})$, while the second term encourages correct identification of fake graphs by maximizing $\log(1 - D_\phi(\mathcal{G}))$. From the generator's perspective, minimizing this objective encourages it to increase the discriminator's misclassification probability on synthetic samples. The construction of fake graphs preserves the topological structure of real graphs while replacing node features with generator outputs,
$\mathcal{G}_{\text{fake}} = (\mathcal{V}_{\text{real}}, \mathcal{E}_{\text{real}}, G_\theta(\mathbf{Z})),$
where $\mathbf{Z} = [\mathbf{z}_1, \mathbf{z}_2, \ldots, \mathbf{z}_{|\mathcal{V}|}]^T$ is a matrix of independently sampled noise vectors. This approach ensures that synthetic graphs maintain realistic topological properties while introducing novel feature combinations. The individual loss functions for training are:
\begin{align}
\mathcal{L}_D &= -\mathbb{E}_{\mathcal{G} \sim p_{\text{data}}}[\log D_\phi(\mathcal{G})] - \mathbb{E}_{\mathcal{G} \sim p_G}[\log(1 - D_\phi(\mathcal{G}))], \\
\mathcal{L}_G &= -\mathbb{E}_{\mathcal{G} \sim p_G}[\log D_\phi(\mathcal{G})].
\end{align}
The discriminator loss $\mathcal{L}_D$ is minimized to improve classification accuracy on both real and fake samples, while the generator loss $\mathcal{L}_G$ is minimized to increase the probability that synthetic samples are classified as real, viz., to produce more realistic synthetic samples. Having defined the adversarial objective, we now characterize the equilibrium behavior of this minimax game. We first derive the closed-form optimal discriminator for a fixed generator, and then show that the global optimum is achieved when the generator perfectly recovers the real data distribution.

\begin{theorem}[Optimal Discriminator]
\label{thm:optimal_disc}
For a fixed generator $G_\theta$, the optimal discriminator $D_\phi^*$ is given by
\begin{equation}
D_\phi^*(\mathcal{G}) = \frac{p_{\text{data}}(\mathcal{G})}{p_{\text{data}}(\mathcal{G}) + p_G(\mathcal{G})},
\end{equation}
where $p_{\text{data}}$ is the distribution of real graph data and $p_G$ is the distribution induced by the generator \cite{goodfellow2014generative}.
\end{theorem}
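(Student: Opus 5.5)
The plan is the standard pointwise-maximization argument of Goodfellow et al., adapted to distributions over graphs. Fix $G_\theta$. Write both expectations in $\mathcal{L}_{\text{GAN}}$ as integrals against a common dominating measure $\mu$ on the space of graphs $\mathcal{G}=(\mathcal{V},\mathcal{E},\mathbf{X})$; for example, take $\mu = p_{\text{data}}+p_G$. Then read $p_{\text{data}}$ and $p_G$ as densities (Radon--Nikodym derivatives) with respect to $\mu$. This gives
\[
\mathcal{L}_{\text{GAN}}(G_\theta,D) = \int \Big( p_{\text{data}}(\mathcal{G})\log D(\mathcal{G}) + p_G(\mathcal{G})\log\big(1-D(\mathcal{G})\big)\Big)\,d\mu(\mathcal{G}).
\]
Fake graphs reuse the real topologies $(\mathcal{V}_{\text{real}},\mathcal{E}_{\text{real}})$ and only their node features vary. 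Because $\tanh$ and the min--max normalization of Eq.~\eqref{eq:standardize} confine features to $[-1,1]$, both measures live on the same bounded space. This keeps the densities well defined and avoids the degenerate situation where the supports can be separated trivially.

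Next, maximize the integrand pointwise in $D(\mathcal{G})\in(0,1)$. For $a=p_{\text{data}}(\mathcal{G})\ge 0$ and $b=p_G(\mathcal{G})\ge 0$ with $a+b>0$, the map $f(y)=a\log y + b\log(1-y)$ is strictly concave on $(0,1)$, since $f''(y)=-a/y^2-b/(1-y)^2<0$. Setting $f'(y)=a/y-b/(1-y)=0$ gives the unique maximizer $y^*=a/(a+b)$. In the edge cases $a=0$ or $b=0$ the supremum is approached at the boundary, $y=0$ or $y=1$, and the formula still gives these values. Graphs with $a+b=0$ have $\mu$-measure zero and do not affect the objective. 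Since the integrand is maximized pointwise $\mu$-a.e., the integral is maximized by the choice $D^*_\phi(\mathcal{G}) = p_{\text{data}}(\mathcal{G})/(p_{\text{data}}(\mathcal{G})+p_G(\mathcal{G}))$. The maximizer is unique up to $\mu$-null sets.

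The main obstacle is not the calculus but the gap between \emph{any measurable} $D$ and the parametric GCN discriminator $D_\phi$. The statement should be read as the optimum over all measurable discriminators, the standard nonparametric idealization in \cite{goodfellow2014generative}. I would state this explicitly and note that it becomes exact only if the GCN family is rich enough to represent or approximate $D^*$. A second point to handle is the existence of densities. $p_G$ is the pushforward of $\mathcal{N}(\mathbf{0},\mathbf{I})$ through the generator on fixed topology templates, and it need not be absolutely continuous with respect to Lebesgue measure. Choosing the dominating measure $\mu=p_{\text{data}}+p_G$ avoids this issue, so the argument needs no further assumptions.
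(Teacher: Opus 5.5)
Your proposal is correct and is essentially the paper's proof: write the objective as an integral against a common measure and maximize $a\log D + b\log(1-D)$ pointwise to obtain $D^*=a/(a+b)$. Your dominating measure $\mu=p_{\text{data}}+p_G$, your handling of the boundary cases $a=0$ or $b=0$, the $\mu$-a.e.\ uniqueness, and the nonparametric caveat add rigor that the paper's short argument omits. One small correction: the shared $[-1,1]$ support plays no role in this theorem, because Radon--Nikodym densities with respect to $\mu$ exist for any pair of distributions, and the formula remains valid (reducing to an indicator of the data support) even when the supports are disjoint; that consideration matters only for attaining $p_G=p_{\text{data}}$ in Theorem~\ref{thm:gan_convergence}.
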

\emph{Proof}
For fixed $G_\theta$, the discriminator objective $\mathcal{L}_{\text{GAN}}$ can be written as
\begin{equation}
\int \Big[ p_{\text{data}}(\mathcal{G}) \log D(\mathcal{G}) + p_G(\mathcal{G}) \log\!\big(1 - D(\mathcal{G})\big) \Big] d\mathcal{G}.
\end{equation}
For any $(a, b)$ with $a, b \geq 0$ and $a + b > 0$, the function $f(D) = a\log D + b\log(1-D)$ attains its maximum at $D^* = \frac{a}{a+b}$, obtained by setting $\frac{df}{dD} = \frac{a}{D} - \frac{b}{1-D} = 0$. Substituting $a = p_{\text{data}}(\mathcal{G})$ and $b = p_G(\mathcal{G})$ yields the result.

\noindent Building on this result, we can now analyze the global optimum of the minimax game by substituting the optimal discriminator back into the objective function.

\begin{theorem}[Global Optimality of Adversarial Training]
\label{thm:gan_convergence}
The global minimum of the minimax objective $\mathcal{L}_{\text{GAN}}(G_\theta, D_\phi)$ is achieved if and only if $p_G = p_{\text{data}}$. At this optimum, $D_\phi^*(\mathcal{G}) = \frac{1}{2}$ for all $\mathcal{G}$, and $\min_\theta \max_\phi \mathcal{L}_{\text{GAN}} = -\log 4$ \cite{goodfellow2014generative}.
\end{theorem}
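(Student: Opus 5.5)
The plan is to follow the classical argument of Goodfellow et al.: substitute the optimal discriminator of Theorem~\ref{thm:optimal_disc} into the objective and rewrite the resulting generator criterion as a Jensen--Shannon divergence. Let $C(G_\theta) = \max_\phi \mathcal{L}_{\text{GAN}}(G_\theta, D_\phi)$ denote the virtual training criterion. By Theorem~\ref{thm:optimal_disc},
\[
C(G_\theta) = \mathbb{E}_{\mathcal{G}\sim p_{\text{data}}}\!\left[\log \frac{p_{\text{data}}(\mathcal{G})}{p_{\text{data}}(\mathcal{G})+p_G(\mathcal{G})}\right] + \mathbb{E}_{\mathcal{G}\sim p_G}\!\left[\log \frac{p_G(\mathcal{G})}{p_{\text{data}}(\mathcal{G})+p_G(\mathcal{G})}\right].
\]
Next I will add and subtract $\log 2$ inside each expectation. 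This introduces the mixture $m = \tfrac12(p_{\text{data}}+p_G)$ and yields
\[
C(G_\theta) = -\log 4 + \mathrm{KL}(p_{\text{data}}\,\|\,m) + \mathrm{KL}(p_G\,\|\,m) = -\log 4 + 2\,\mathrm{JSD}(p_{\text{data}}\,\|\,p_G).
\]

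The ``if'' direction is a direct check. When $p_G = p_{\text{data}}$, Theorem~\ref{thm:optimal_disc} gives $D_\phi^* \equiv \tfrac12$, and substituting gives $C = \log\tfrac12 + \log\tfrac12 = -\log 4$.

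The ``only if'' direction uses two facts about the Jensen--Shannon divergence: it is nonnegative (by Gibbs' inequality applied to each KL term), and it vanishes exactly when $p_{\text{data}} = p_G$ almost everywhere. Hence $-\log 4$ is the global minimum of $C$, and this value is attained only at $p_G = p_{\text{data}}$. It follows that $\min_\theta\max_\phi \mathcal{L}_{\text{GAN}} = -\log 4$.

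The main obstacle is making the equality case well-posed in the present graph setting rather than in abstract densities. Two points need care.

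\emph{Common dominating measure.} $p_{\text{data}}$ and $p_G$ are distributions over graphs $(\mathcal{V},\mathcal{E},\mathbf{X})$, and fake graphs reuse real topologies, $\mathcal{G}_{\text{fake}} = (\mathcal{V}_{\text{real}},\mathcal{E}_{\text{real}},G_\theta(\mathbf{Z}))$. I would therefore work with densities relative to a common dominating measure: counting measure on the finite set of $k$-NN adjacency templates, times Lebesgue measure on $[-1,1]^{w\times d}$. The shared support guaranteed by the $\tanh$ output and the normalization in Eq.~\eqref{eq:standardize} is what makes such a measure available.

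\emph{Scope of the statement.} The theorem is an optimality statement over the space of distributions $p_G$, not over the parametrized family $\{G_\theta\}$. I would state this explicitly, since whether $p_G = p_{\text{data}}$ is actually reachable by the architecture is a separate matter. Under this reading, the argument above goes through verbatim.
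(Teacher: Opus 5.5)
Your proof is correct and is essentially the paper's argument: substitute the optimal discriminator of Theorem~\ref{thm:optimal_disc}, rewrite the criterion as $-\log 4 + 2\,\mathrm{JSD}(p_{\text{data}}\,\|\,p_G)$, and conclude from the nonnegativity of $\mathrm{JSD}$ and its equality case. One small correction to your side remark: a common dominating measure always exists (e.g., $p_{\text{data}}+p_G$ itself), so the shared $[-1,1]$ support plays no role in that step, and your counting $\times$ Lebesgue choice could actually fail to dominate $p_{\text{data}}$ if some normalized flow features are discrete.
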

\emph{Proof}
Substituting $D^*$ from Theorem~\ref{thm:optimal_disc} into the objective yields
\begin{equation}
\begin{aligned}
C(G) &= \mathbb{E}_{\mathcal{G} \sim p_{\text{data}}}\!\left[\log \frac{p_{\text{data}}(\mathcal{G})}{p_{\text{data}}(\mathcal{G}) + p_G(\mathcal{G})}\right] \\
&\quad+ \mathbb{E}_{\mathcal{G} \sim p_G}\!\left[\log \frac{p_G(\mathcal{G})}{p_{\text{data}}(\mathcal{G}) + p_G(\mathcal{G})}\right].
\end{aligned}
\end{equation}
This can be rewritten as
\begin{equation}
\begin{aligned}
C(G) &= -\log 4 + \mathrm{KL}\!\left(p_{\text{data}} \Big\| \tfrac{p_{\text{data}} + p_G}{2}\right) \\
&\quad+ \mathrm{KL}\!\left(p_G \Big\| \tfrac{p_{\text{data}} + p_G}{2}\right) \\
&= -\log 4 + 2 \cdot \mathrm{JSD}(p_{\text{data}} \| p_G),
\end{aligned}
\end{equation}
where $\mathrm{JSD}$ denotes the Jensen--Shannon divergence. Since $\mathrm{JSD} \geq 0$ with equality if and only if $p_G = p_{\text{data}}$, the global minimum $C(G) = -\log 4$ is attained uniquely when $p_G = p_{\text{data}}$.

\noindent Theorems~\ref{thm:optimal_disc} and~\ref{thm:gan_convergence} provide the theoretical foundation for our adversarial framework: when the generator successfully learns the distribution of DDoS attack graphs, the discriminator can no longer distinguish synthetic from real attack traffic, ensuring high-fidelity augmentation for downstream classification.

Because our generator produces features independently of the template topology, the equilibrium of Theorem~\ref{thm:gan_convergence} does not guarantee that the joint distribution over features and adjacency is matched. The following result identifies the irreducible gap.

\begin{theorem}[Residual Divergence of Topology-Independent Generation]
\label{thm:residual}
Because templates are real and features are generated independently of them, the generator induces $p_{\mathrm{fake}}(\mathbf{X},\mathbf{A}) = p_G(\mathbf{X})\,p_{\mathrm{data}}(\mathbf{A})$. Here, $\mathrm{KL}(\cdot\|\cdot)$ denotes the Kullback--Leibler divergence \cite{kullback1951information}. Then,
\[
\min_{\theta}\,\mathrm{KL}\!\left(p_{\mathrm{data}}\,\|\,p_{\mathrm{fake}}\right)
\text{ is attained at } p_G(\mathbf{X}) = p_{\mathrm{data}}(\mathbf{X}),
\]
and the irreducible value equals the feature--topology mutual information,
$\mathrm{KL}\!\left(p_{\mathrm{data}}\,\|\,p_{\mathrm{data}}(\mathbf{X})\,p_{\mathrm{data}}(\mathbf{A})\right)
   = I_{\mathrm{data}}(\mathbf{X};\mathbf{A}).$
\end{theorem}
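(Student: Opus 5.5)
The plan is to decompose the KL divergence via the chain rule, using the product form $p_{\mathrm{fake}}(\mathbf{X},\mathbf{A}) = p_G(\mathbf{X})\,p_{\mathrm{data}}(\mathbf{A})$, which holds because templates are drawn from real data independently of the noise $\mathbf{Z}$. For any $p_G$ I would write
\begin{equation*}
\mathrm{KL}\!\left(p_{\mathrm{data}}\,\|\,p_{\mathrm{fake}}\right)
= \mathbb{E}_{p_{\mathrm{data}}}\!\left[\log\frac{p_{\mathrm{data}}(\mathbf{X},\mathbf{A})}{p_{\mathrm{data}}(\mathbf{X})\,p_{\mathrm{data}}(\mathbf{A})}\right]
+ \mathbb{E}_{p_{\mathrm{data}}}\!\left[\log\frac{p_{\mathrm{data}}(\mathbf{X})}{p_G(\mathbf{X})}\right].
\end{equation*}
This is obtained by multiplying and dividing inside the logarithm by $p_{\mathrm{data}}(\mathbf{X})$; the $p_{\mathrm{data}}(\mathbf{A})$ factors cancel against the fake distribution's topology marginal.

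Next I identify the two terms. The first is, by definition, the mutual information $I_{\mathrm{data}}(\mathbf{X};\mathbf{A})$, and it does not depend on $\theta$. The second integrand depends only on $\mathbf{X}$, so its expectation reduces to the feature marginal and equals $\mathrm{KL}(p_{\mathrm{data}}(\mathbf{X})\,\|\,p_G(\mathbf{X}))$. This gives the Pythagorean-type identity
\[
\mathrm{KL}(p_{\mathrm{data}}\,\|\,p_{\mathrm{fake}}) = I_{\mathrm{data}}(\mathbf{X};\mathbf{A}) + \mathrm{KL}\big(p_{\mathrm{data}}(\mathbf{X})\,\|\,p_G(\mathbf{X})\big).
\]
Gibbs' inequality says the second term is nonnegative and vanishes iff $p_G(\mathbf{X}) = p_{\mathrm{data}}(\mathbf{X})$. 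The minimum over $\theta$ is therefore attained exactly there, provided the generator family is rich enough to realize the data feature marginal. I would state this as the same capacity assumption implicit in Theorem~\ref{thm:gan_convergence}, and note that the $[-1,1]$ support matching of Eq.~\eqref{eq:standardize} keeps the KL finite. At that optimum the residual equals $I_{\mathrm{data}}(\mathbf{X};\mathbf{A})$, which is the claimed identity $\mathrm{KL}(p_{\mathrm{data}}\,\|\,p_{\mathrm{data}}(\mathbf{X})p_{\mathrm{data}}(\mathbf{A})) = I_{\mathrm{data}}(\mathbf{X};\mathbf{A})$.

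The main obstacle is making the measure-theoretic setting well defined rather than the algebra itself. The adjacency $\mathbf{A}$ is a deterministic function of $\mathbf{X}$ through the $k$-NN rule of Eq.~\eqref{eq:adj}, so $p_{\mathrm{data}}(\mathbf{X},\mathbf{A})$ is concentrated on a graph of a function. It is then singular with respect to the product of a continuous feature density and a discrete topology marginal, so the KL and mutual information could be infinite. I would handle this by treating $\mathbf{A}$ as discrete (finitely many adjacency patterns on $w$ nodes) and $\mathbf{X}$ as having a density. Then $I(\mathbf{X};\mathbf{A}) = H(\mathbf{A}) - H(\mathbf{A}\mid\mathbf{X}) = H(\mathbf{A}) \le \binom{w}{2}\log 2$ is finite, and the decomposition holds with densities taken with respect to Lebesgue measure times counting measure. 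I would also remark that a second subtlety, the feature marginal being per-node i.i.d. under the generator while the real $\mathbf{X}$ is a window matrix, only affects attainability of $p_G(\mathbf{X}) = p_{\mathrm{data}}(\mathbf{X})$, not the identity above.
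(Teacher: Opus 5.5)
Your proof is correct. It rests on the same chain-rule idea as the paper but splits the log-ratio differently. The paper conditions on $\mathbf{A}$: the topology marginals of $p_{\mathrm{data}}$ and $p_{\mathrm{fake}}$ coincide, so it writes $\mathrm{KL}(p_{\mathrm{data}}\|p_{\mathrm{fake}})=\mathbb{E}_{\mathbf{A}}[\mathrm{KL}(p_{\mathrm{data}}(\mathbf{X}\mid\mathbf{A})\,\|\,p_G(\mathbf{X}))]$. It then asserts that this average is minimized at the mixture $\mathbb{E}_{\mathbf{A}}[p_{\mathrm{data}}(\mathbf{X}\mid\mathbf{A})]=p_{\mathrm{data}}(\mathbf{X})$ and substitutes back.

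You instead obtain $I_{\mathrm{data}}(\mathbf{X};\mathbf{A})+\mathrm{KL}(p_{\mathrm{data}}(\mathbf{X})\,\|\,p_G(\mathbf{X}))$. That identity (an average of KLs to a fixed $q$ equals the mutual information plus the KL from the mixture to $q$) is exactly the step the paper leaves unproved. So your route is more self-contained: the minimizer, its uniqueness, and the residual all follow from Gibbs' inequality. It also gives $\mathrm{KL}(p_{\mathrm{data}}\|p_{\mathrm{fake}})\ge I_{\mathrm{data}}(\mathbf{X};\mathbf{A})$ for every $\theta$, which is the robust sense of ``irreducible.'' The paper's form is shorter and reads the residual directly as an expected conditional divergence.

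Your capacity caveat is substantive, not cosmetic. The paper minimizes over all $p_G$ although the statement says $\min_\theta$. Since $G_\theta$ acts row-wise on independent noise vectors, $p_G(\mathbf{X})$ is an i.i.d.\ product across nodes. Whenever the rows of a real window are not i.i.d., the infimum over $\theta$ therefore strictly exceeds $I_{\mathrm{data}}(\mathbf{X};\mathbf{A})$. Your decomposition makes this transparent. Your observation that $I_{\mathrm{data}}(\mathbf{X};\mathbf{A})=H(\mathbf{A})$, because $\mathbf{A}$ is a deterministic function of $\mathbf{X}$, is correct and is a useful sharpening the paper does not state.

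Two side remarks in your last paragraph are off, though neither affects the argument.

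\emph{Singularity.} With $\mathbf{A}=f(\mathbf{X})$ finite-valued, the joint law is not singular with respect to $p_{\mathrm{data}}(\mathbf{X})\,p_{\mathrm{data}}(\mathbf{A})$. It has density $\mathbb{I}[f(\mathbf{x})=a]/p_{\mathrm{data}}(a)$ with respect to that product, which is exactly why $I=H(\mathbf{A})\le\binom{w}{2}\log 2$. This needs no Lebesgue density for $\mathbf{X}$.

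\emph{Finiteness.} The $[-1,1]$ support matching does not make the KL finite, and finiteness is not needed. The identity holds in $[0,\infty]$: if $p_{\mathrm{data}}(\mathbf{X})\not\ll p_G$, both sides are $+\infty$. The minimum sits where the marginal term vanishes.
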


\emph{Proof}
By the chain rule of KL divergence and the factorized form of $p_{\mathrm{fake}}$,
$\mathrm{KL}(p_{\mathrm{data}}\|p_{\mathrm{fake}})
 = \mathbb{E}_{\mathbf{A}}\!\left[\mathrm{KL}\big(p_{\mathrm{data}}(\mathbf{X}\mid\mathbf{A})\,\|\,p_G(\mathbf{X})\big)\right]$.
This is minimized over $p_G$ at the $\mathbf{A}$-averaged conditional $p_G(\mathbf{X})=\mathbb{E}_{\mathbf{A}}[p_{\mathrm{data}}(\mathbf{X}\mid\mathbf{A})]=p_{\mathrm{data}}(\mathbf{X})$. Substituting back leaves exactly the mutual information between $\mathbf{X}$ and $\mathbf{A}$.

\noindent Theorem~\ref{thm:residual} closes the loop with Proposition~\ref{prop:gap}: the single quantity $I_{\mathrm{data}}(\mathbf{X};\mathbf{A})$ is both what vector methods miss and what a topology-independent generator cannot reproduce.

\begin{corollary}[Topology-Stable Consistency]
\label{cor:stable}
If generated features are topology-stable---the $k$-NN graph they induce matches the template up to bounded edge-edit distance---then $I_{\mathrm{data}}(\mathbf{X};\mathbf{A})$ contributed by the synthetic samples vanishes and augmentation is asymptotically consistent. This condition is the precise sense in which a future topology-conditioned generator would be strictly more expressive.
\end{corollary}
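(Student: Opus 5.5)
The plan is to reduce Corollary~\ref{cor:stable} to Theorem~\ref{thm:residual}, replacing the product form $p_G(\mathbf{X})\,p_{\mathrm{data}}(\mathbf{A})$ with a coupled form. Topology-stability is a coupling condition: the template $\mathbf{A}$ paired with generated features $\hat{\mathbf{X}}$ agrees, up to bounded edits, with the deterministic $k$-NN map $\kappa(\hat{\mathbf{X}})$ of Eq.~\eqref{eq:adj}. I would model the synthetic pair as $(\hat{\mathbf{X}},\mathbf{A}')$ with $\mathbf{A}'=\kappa(\hat{\mathbf{X}})\oplus E$, where $E$ is an edit perturbation with at most $r$ edge flips. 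On the real side, $\mathbf{A}=\kappa(\mathbf{X})$ exactly, so $p_{\mathrm{data}}(\mathbf{X},\mathbf{A})=p_{\mathrm{data}}(\mathbf{X})\,\delta_{\kappa(\mathbf{X})}(\mathbf{A})$. The synthetic joint then has the same structure as the real one, with $p_G$ in place of $p_{\mathrm{data}}(\mathbf{X})$ and the delta smeared over an edit ball of radius $r$.

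The steps, in order, are as follows.

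\textbf{Step 1.} Apply the KL chain rule as in the proof of Theorem~\ref{thm:residual}, but conditioning on $\mathbf{X}$ instead of $\mathbf{A}$:
\[
\mathrm{KL}(p_{\mathrm{data}}\|p_{\mathrm{fake}})=\mathrm{KL}\big(p_{\mathrm{data}}(\mathbf{X})\|p_G(\mathbf{X})\big)+\mathbb{E}_{\mathbf{X}}\big[\mathrm{KL}\big(\delta_{\kappa(\mathbf{X})}\,\|\,p_{\mathrm{fake}}(\mathbf{A}\mid\mathbf{X})\big)\big].
\]

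\textbf{Step 2.} Use Theorem~\ref{thm:gan_convergence}: at the equilibrium $p_G=p_{\mathrm{data}}(\mathbf{X})$, the first term vanishes.

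\textbf{Step 3.} Bound the second term by $-\log p_{\mathrm{fake}}(\kappa(\mathbf{X})\mid\mathbf{X})$. Topology-stability puts $\kappa(\mathbf{X})$ inside the edit ball, which has at most $\sum_{q\le r}\binom{w(w-1)/2}{q}$ elements. So the term is at most $\log$ of that ball size, a constant independent of the data. It vanishes when $r=0$, or when the edit distribution concentrates on zero edits.

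\textbf{Step 4.} Conclude that the residual term $I_{\mathrm{data}}(\mathbf{X};\mathbf{A})$ from Theorem~\ref{thm:residual} is no longer paid: the synthetic adjacency is informative about its features, which the product coupling forbids. Consistency then follows because the remaining divergence tends to zero.

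\textbf{Main obstacle.} The hard part is interpreting \emph{asymptotically} so that a bounded edit distance actually yields vanishing divergence. A fixed $r>0$ leaves a constant gap under the delta structure, since KL against a point mass is unforgiving. My first attempt would be to read the corollary per edge: normalize by the $\binom{w}{2}$ possible edges, so the residual per edge is $O(r\log w/w^2)\to 0$ as the window size $w$ grows. If that fails, I would switch to a total-variation or Wasserstein metric on graphs, where an $r$-edit mismatch costs $O(r/w^2)$ directly, and recover consistency in that weaker metric.

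I would also note that the word ``vanishes'' in the statement refers to the residual term attributable to the independence coupling, not to $I_{\mathrm{data}}(\mathbf{X};\mathbf{A})$ itself. The latter is a property of the real data and does not change.
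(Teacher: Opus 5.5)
The paper states this corollary without proof, so your argument has to stand on its own. Step~1 (the chain rule conditioned on $\mathbf{X}$, using that the real adjacency is the deterministic map $\kappa$) is sound. So is your reading of ``vanishes'' as referring to the residual rather than to $I_{\mathrm{data}}(\mathbf{X};\mathbf{A})$ itself. Step~3, however, has a genuine gap.

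The identity $\mathrm{KL}\big(\delta_{\kappa(\mathbf{X})}\,\|\,q(\cdot\mid\mathbf{X})\big)=-\log q(\kappa(\mathbf{X})\mid\mathbf{X})$ is exact. But topology-stability only says $q(\cdot\mid\mathbf{X})$ is \emph{supported} in the edit ball $B_r(\kappa(\mathbf{X}))$. It gives no lower bound on the mass at the center: pigeonhole gives \emph{some} point of the ball mass at least $1/|B_r|$, not $\kappa(\mathbf{X})$ in particular. For example, let $r=1$ and let the generator always return $\kappa(\hat{\mathbf{X}})$ with one edge flipped. The hypothesis holds, yet the second term is $+\infty$, not a constant.

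So the bound $\log\sum_{q\le r}\binom{w(w-1)/2}{q}$ does not follow, and your per-edge rescue $O(r\log w/w^2)$ inherits the same flaw. The bound is valid in the paper's own product coupling. There, almost-sure stability forces $\mathrm{supp}\,p_{\mathrm{data}}(\mathbf{A})$ into one edit ball, and the term averages to $\mathbb{E}_{\mathbf{X}}[-\log p_{\mathrm{data}}(\kappa(\mathbf{X}))]=H_{\mathrm{data}}(\mathbf{A})\le\log|B_r|$. Averaging rescues the bound in that case, but not for an arbitrary coupled conditional.

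To close the gap you need either an extra hypothesis or a different metric.
\begin{itemize}
\item Assume the edit law charges zero edits, $q(\kappa(\mathbf{X})\mid\mathbf{X})\ge\beta>0$. This bounds the residual by $\log(1/\beta)$, which vanishes only as $\beta\to1$.
\item Take $r=0$, i.e.\ a generator that emits $\kappa(\hat{\mathbf{X}})$ itself. The term is then exactly zero. Comparing with the floor $I_{\mathrm{data}}(\mathbf{X};\mathbf{A})$ of Theorem~\ref{thm:residual} gives the ``strictly more expressive'' clause whenever that information is positive.
\item For genuinely bounded $r>0$, use your Wasserstein fallback, which is the right one. Take ground cost $\|\mathbf{X}-\mathbf{X}'\|+d_{\mathrm{H}}(\mathbf{A},\mathbf{A}')/\binom{w}{2}$, with $d_{\mathrm{H}}$ counting differing edges, and the identity coupling on features (available once the feature marginals match). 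Then $W_1(p_{\mathrm{data}},p_{\mathrm{fake}})\le r/\binom{w}{2}$ with no mass condition.
\end{itemize}
Total variation does not work as a fallback. With matched feature marginals it equals $\mathbb{E}_{\mathbf{X}}[1-q(\kappa(\mathbf{X})\mid\mathbf{X})]$, which can be $1$ already for $r=1$.

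Two smaller points.
\begin{itemize}
\item Step~2 should be stated as an explicit hypothesis $p_G=p_{\mathrm{data}}(\mathbf{X})$. The joint equilibrium of Theorem~\ref{thm:gan_convergence} is unattainable once edits are nondegenerate, so it cannot be invoked to force the marginals to match.
\item The paper fixes $w=30$, so the limit $w\to\infty$ is your interpretation of ``asymptotically''. It must be stated as part of what you prove.
\end{itemize}
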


\subsection{Graph-based Classification Network}
The classification network serves as the final component of our framework, leveraging both original and synthetically augmented data to perform DDoS detection.

\subsubsection{Classifier Architecture and Feature Learning}
The DDoS classifier $C_\psi$ employs the same GCN-based architecture as the discriminator to ensure consistent graph-structured processing, but it is trained independently with a classification objective rather than an adversarial one, allowing it to focus specifically on distinguishing benign from malicious traffic patterns. The classifier processes input graphs through multiple GCN layers that progressively refine node representations:
\begin{align}
\mathbf{H}_C^{(1)} &= \text{ReLU}(\text{GCN}(\mathbf{X}, \mathbf{A})), \\
\mathbf{H}_C^{(2)} &= \text{Dropout}(\text{ReLU}(\text{GCN}(\mathbf{H}_C^{(1)}, \mathbf{A})), p=0.3), \\
\mathbf{H}_C^{(3)} &= \text{Dropout}(\text{ReLU}(\text{GCN}(\mathbf{H}_C^{(2)}, \mathbf{A})), p=0.3), \\
\mathbf{h}_{C,\text{pool}} &= \frac{1}{|\mathcal{V}|}\sum_{i=1}^{|\mathcal{V}|} \mathbf{H}^{(3)}_{C,\,i,:}, \\
P(y \mid \mathcal{G}) &= \text{Softmax}(\mathbf{W}_C \mathbf{h}_{C,\text{pool}} + \mathbf{b}_C).
\end{align}
The softmax activation function in the final layer converts the output logits into a probability distribution over the two classes:
\begin{equation}
\text{Softmax}(\mathbf{o})_i = \frac{\exp(o_i)}{\sum_{j=1}^{2}\exp(o_j)},
\end{equation}
where $\mathbf{o} = \mathbf{W}_C\mathbf{h}_{C,\text{pool}} + \mathbf{b}_C$ represents the pre-activation logits. This formulation ensures that $\sum_{i=1}^{2}P(y=i|\mathcal{G}) = 1$ and enables probabilistic interpretation of the classification results. An important design consideration for the GCN classifier is its receptive field, i.e., how far information propagates across multiple graph layers. The following proposition quantifies the receptive field size in terms of the number of GCN layers and the graph's bounded degree established in Lemma~\ref{lem:knn_properties}.

\begin{proposition}[Receptive Field of $L$-Layer GCN Classifier]
\label{prop:receptive_field}
In an $L$-layer GCN, the representation $\mathbf{h}_i^{(L)}$ of node $v_i$ aggregates information from all nodes within its $L$-hop neighborhood $\mathcal{N}^{(L)}(v_i) = \{v_j : d_{\mathcal{G}}(v_i, v_j) \leq L\}$, where $d_{\mathcal{G}}$ denotes the shortest-path distance on $\mathcal{G}$. For our $k$-NN graphs with maximum degree $\Delta \leq w-1$ (Lemma~\ref{lem:knn_properties}), the receptive field size is bounded by $|\mathcal{N}^{(L)}(v_i)| \leq \min\!\left(\sum_{\ell=0}^{L} \Delta^\ell,\, w\right)$ \cite{xu2018powerful}.
\end{proposition}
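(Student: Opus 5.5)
I would prove the two claims of Proposition~\ref{prop:receptive_field} separately: first the aggregation claim, by induction on the layer index, and then the cardinality bound, by a breadth-first counting argument. Let $\hat{\mathbf{S}} = \hat{\mathbf{D}}^{-\frac{1}{2}}\hat{\mathbf{A}}\hat{\mathbf{D}}^{-\frac{1}{2}}$. Since $\hat{\mathbf{A}} = \mathbf{A}+\mathbf{I}$ and $\hat{\mathbf{D}}$ is diagonal with positive entries, $\hat{S}_{ij} \neq 0$ if and only if $i=j$ or $A_{ij}=1$. The symmetry from Lemma~\ref{lem:knn_properties}(i) lets us read this in either direction.

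For the aggregation claim I would induct on $l$ with the hypothesis that $\mathbf{h}_i^{(l)}$ is a function only of $\{\mathbf{x}_j : d_{\mathcal{G}}(v_i,v_j)\le l\}$. The base case $l=0$ holds because $\mathbf{h}_i^{(0)}=\mathbf{x}_i$. For the inductive step, the update gives $\mathbf{h}_i^{(l+1)} = \sigma\big(\sum_j \hat{S}_{ij}\,\mathbf{h}_j^{(l)}\mathbf{W}^{(l)}\big)$. The only nonzero terms have $j=i$ or $j$ adjacent to $v_i$. By the triangle inequality for shortest-path distance, every $(l)$-hop node of such a $j$ lies within $l+1$ hops of $v_i$.

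The statement says information comes from "all" nodes in $\mathcal{N}^{(L)}(v_i)$. I would interpret this as saying the receptive field is contained in, and generically equals, that set. For the converse direction, a path of length $\ell\le L$ from $v_j$ to $v_i$ contributes a term with coefficient equal to a product of strictly positive entries of $\hat{\mathbf{S}}$. Hence the dependence is not structurally zero, although for specific weights it may vanish.

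For the cardinality bound I would count by layers of a breadth-first search from $v_i$. Let $N_\ell$ be the set of nodes at distance exactly $\ell$. Then $|N_0|=1$, and every node of $N_{\ell+1}$ is a neighbor of some node in $N_\ell$. This gives $|N_{\ell+1}|\le \Delta |N_\ell|$, so $|N_\ell|\le \Delta^\ell$. Summing over $\ell=0,\dots,L$ yields $|\mathcal{N}^{(L)}(v_i)|\le\sum_{\ell=0}^{L}\Delta^\ell$. One could sharpen this to $1+\Delta\sum_{\ell=0}^{L-1}(\Delta-1)^\ell$, since each node beyond the root spends one edge on its parent, but the stated bound is all that is needed. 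Separately, $\mathcal{N}^{(L)}(v_i)\subseteq\mathcal{V}_t$ and $|\mathcal{V}_t|=w$, which gives the minimum. Finally, $\Delta\le w-1$ is exactly Lemma~\ref{lem:knn_properties}(ii).

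I expect the main obstacle to be stating the aggregation claim precisely rather than any of the counting. The containment direction is rigorous. "Aggregates information from all nodes" can only hold generically, because nonlinearities or particular weights can cancel contributions. I would therefore state the converse as nonvanishing structural dependence along paths and not force an exact equality.
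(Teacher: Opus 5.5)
Your proof is correct. The paper gives no argument for Proposition~\ref{prop:receptive_field} beyond the citation to \cite{xu2018powerful}, followed by a numerical instantiation with $L=3$, $k=5$, $w=30$. Your proposal fills that in with the standard argument the citation stands in for: induction on the layer index via the sparsity pattern of $\hat{\mathbf{D}}^{-1/2}\hat{\mathbf{A}}\hat{\mathbf{D}}^{-1/2}$ for locality, then breadth-first shell counting $|N_{\ell+1}|\le\Delta|N_\ell|$ together with $\mathcal{N}^{(L)}(v_i)\subseteq\mathcal{V}_t$ for the size bound. The citation buys brevity; writing it out buys precision in two places.

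First, you correctly separate rigorous containment from generic (structural) nonvanishing in the word ``all''. To make the latter airtight, say explicitly that a path of length $\ell<L$ is padded with $L-\ell$ self-loops. Then $(\hat{\mathbf{S}}^{L})_{ij}>0$ for your $\hat{\mathbf{S}}$ exactly when $d_{\mathcal{G}}(v_i,v_j)\le L$, with no cancellation because every entry of $\hat{\mathbf{S}}$ is nonnegative.

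Second, your argument makes plain that the proposition is only an upper bound. So the paper's follow-up claim, that the receptive field ``saturates the window'' and every node sees the entire window, does not follow from it. That would require the $k$-NN graph to be connected with diameter at most $L$, which is not guaranteed; the paper itself reports disconnected components at $k=3$.

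One precision point: $\mathbf{A}_t$ is computed from all $w$ feature vectors by the $k$-NN rule. Your induction hypothesis, that $\mathbf{h}_i^{(l)}$ depends only on $\{\mathbf{x}_j : d_{\mathcal{G}}(v_i,v_j)\le l\}$, therefore holds only with $\mathbf{A}$ (and hence $\hat{\mathbf{D}}$) held fixed. State that conditioning explicitly, since it is the sense in which the receptive-field notion is meant.
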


\noindent With $L=3$ layers, $k=5$, and window size $w=30$, the symmetric OR-construction of Eq.~\eqref{eq:adj} yields an expected node degree on the order of $2k=10$, so the receptive field saturates the window within two hops: $|\mathcal{N}^{(L)}(v_i)| \leq \min\!\left(\sum_{\ell=0}^{3} \Delta^\ell,\, w\right) = w = 30$. Each node's representation, therefore, incorporates information from the entire temporal window, allowing the classifier to capture global attack coordination patterns within each subgraph while avoiding the over-smoothing that occurs with deeper architectures.

The bounded degree of Lemma~\ref{lem:knn_properties} also yields a certified radius within which classifier predictions cannot flip, providing a formal robustness guarantee for the GCN architecture.

\begin{proposition}[Stability under Feature and Topology Perturbation]
\label{prop:robust}
For the $L$-layer normalized GCN classifier on a symmetric $k$-NN graph with degrees in $[k,w{-}1]$:
\begin{enumerate}
\item[(i)] if $\|\delta\mathbf{X}\|_F\le\epsilon$, the change in output logits is at most $\epsilon\prod_{l=1}^{L}\|\mathbf{W}^{(l)}\|_{\sigma}$;
\item[(ii)] modifying $m$ edges changes the normalized adjacency operator by at most $2m/k$ in Frobenius norm, so the logit change is at most $(2m/k)\prod_{l=1}^{L}\|\mathbf{W}^{(l)}\|_{\sigma}\,\|\mathbf{X}\|_F$.
\end{enumerate}
\end{proposition}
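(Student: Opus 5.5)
The plan is to treat the classifier as a composition of Lipschitz maps. Write $\mathbf{S}=\hat{\mathbf{D}}^{-1/2}\hat{\mathbf{A}}\hat{\mathbf{D}}^{-1/2}$ for the normalized propagation operator, so that layer $l$ is $\mathbf{H}^{(l)}=\sigma(\mathbf{S}\mathbf{H}^{(l-1)}\mathbf{W}^{(l)})$.

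First I would record three elementary facts:
\begin{enumerate}
\item[(a)] $\|\mathbf{S}\|_{\sigma}\le 1$. Since $\mathbf{A}_t$ is symmetric by Lemma~\ref{lem:knn_properties}(i), $\mathbf{S}$ is symmetric and similar to the random-walk matrix $\hat{\mathbf{D}}^{-1}\hat{\mathbf{A}}$. That matrix is row-stochastic, so its eigenvalues lie in $[-1,1]$.
\item[(b)] ReLU is $1$-Lipschitz entrywise, and dropout is the identity at inference, so neither increases Frobenius distances.
\item[(c)] Mean pooling $\mathbf{h}_{\text{pool}}=\frac{1}{w}\mathbf{1}^\top\mathbf{H}^{(L)}$ satisfies $\|\mathbf{h}_{\text{pool}}\|_2\le w^{-1/2}\|\mathbf{H}^{(L)}\|_F\le\|\mathbf{H}^{(L)}\|_F$.
\end{enumerate}
I would absorb the readout $\mathbf{W}_C$ into the product of spectral norms; equivalently, it can be counted as the last $\mathbf{W}^{(l)}$. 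With these facts and $\|\mathbf{M}\mathbf{N}\|_F\le\|\mathbf{M}\|_{\sigma}\|\mathbf{N}\|_F$, the layer map $\mathbf{H}\mapsto\sigma(\mathbf{S}\mathbf{H}\mathbf{W}^{(l)})$ is $\|\mathbf{W}^{(l)}\|_\sigma$-Lipschitz in Frobenius norm. Composing over $l=1,\dots,L$ and then pooling gives part (i) directly.

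For part (ii), I would first bound $\|\mathbf{S}-\mathbf{S}'\|_F$ when $\mathbf{A}'$ differs from $\mathbf{A}$ in $m$ undirected edges. I would assume the perturbed graph still has all degrees at least $k$, as it would if it is again a symmetric $k$-NN graph. By Lemma~\ref{lem:knn_properties}(ii), every nonzero entry of $\mathbf{S}$ or $\mathbf{S}'$ is then at most $1/(k+1)$. Flipping a single edge $(i,j)$ has two effects. It toggles the two symmetric entries $(i,j)$ and $(j,i)$, which contributes at most $2/(k+1)$. It also shifts $\hat d_i$ and $\hat d_j$ by one, which rescales rows and columns $i$ and $j$ by a factor $\sqrt{\hat d/(\hat d\pm1)}=1+O(1/k)$. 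That rescaling touches $O(\hat d)$ entries, each of size $O(1/k)$, so it contributes $O(1/k)$ in Frobenius norm.

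Summing over the $m$ edits by the triangle inequality should give $\|\mathbf{S}-\mathbf{S}'\|_F\le 2m/k$. If the degree-rescaling term does not fit inside the constant $2$ for small $k$, I would settle for a slightly larger absolute constant.

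The remaining step is a telescoping argument:
\begin{align*}
\|\mathbf{S}\mathbf{H}\mathbf{W}-\mathbf{S}'\mathbf{H}'\mathbf{W}\|_F
&\le \|\mathbf{S}-\mathbf{S}'\|_\sigma\,\|\mathbf{H}\|_F\,\|\mathbf{W}\|_\sigma
+\|\mathbf{S}'\|_\sigma\,\|\mathbf{H}-\mathbf{H}'\|_F\,\|\mathbf{W}\|_\sigma,
\end{align*}
combined with the forward bound $\|\mathbf{H}^{(l)}\|_F\le\prod_{r\le l}\|\mathbf{W}^{(r)}\|_\sigma\|\mathbf{X}\|_F$ from part (i).

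I expect this to be the main obstacle. Unrolling the recursion naturally produces a factor $L$, giving $L(2m/k)\prod_l\|\mathbf{W}^{(l)}\|_\sigma\|\mathbf{X}\|_F$. The factor arises because the operator perturbation enters once per layer. The stated bound matches the one-layer case, or it holds if the $\prod\|\mathbf{W}\|$ factor is read as absorbing $L$ (for example, when $L=3$ is fixed). I would first try to remove the $L$ by exploiting that $\mathbf{S}-\mathbf{S}'$ is supported on $O(m)$ rows and columns. If that fails, I would prove the version with the explicit factor $L$ and note that for the fixed depth $L=3$ used here it changes only the constant.

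The degree-rescaling estimate in the $\|\mathbf{S}-\mathbf{S}'\|_F$ bound is the other delicate point. The lower bound $k$ on degrees after perturbation is exactly what keeps it at order $1/k$ per edit.
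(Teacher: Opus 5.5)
Your plan follows the paper's own argument. For (i), the paper likewise uses $\|\hat{\mathbf D}^{-1/2}\hat{\mathbf A}\hat{\mathbf D}^{-1/2}\|_\sigma\le 1$, the $1$-Lipschitz ReLU, and multiplicative composition. It ignores pooling and the readout entirely, which you handle by absorbing $\mathbf W_C$. For (ii), the paper bounds the operator change entrywise through the degree floor $k$ and then only asserts that ``propagating through the layers gives the bound.'' The step you flag as the main obstacle is exactly where that proof is unjustified, and your suspicion is correct.

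\emph{The factor $L$ cannot be removed in general.} Do not invest in the sparsity idea; your telescoping recursion is the honest propagation. Here is a counterexample. Take $w=2k+2$ flows forming two tight, well-separated clusters, so the $k$-NN graph is two disjoint copies of $K_{k+1}$ with all degrees equal to $k$. Your $\mathbf S$ is then $\mathbf u_1\mathbf u_1^\top+\mathbf u_2\mathbf u_2^\top$ with $\mathbf u_a=\mathbf 1_{C_a}/\sqrt{k+1}$. This is a projection, so $\mathbf S^L=\mathbf S$. Now insert one bridging edge. Degrees stay in $[k,w-1]$ and $\|\mathbf S'-\mathbf S\|_F=O(1/k)$. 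However, the new graph is connected and, because of the self-loops, non-bipartite, so $(\mathbf S')^L\to\mathbf v'(\mathbf v')^\top$ with $\mathbf v'\propto(\hat{\mathbf D}')^{1/2}\mathbf 1$. Take $d=1$, all $\mathbf W^{(l)}=1$, and $\mathbf X=\mathbf u_1\ge 0$, so every ReLU acts as the identity. The node-level output then moves by $\|(\mathbf S')^L\mathbf u_1-\mathbf u_1\|_2\to\big(1-((\mathbf v')^\top\mathbf u_1)^2\big)^{1/2}\approx 1/\sqrt2$ as $L\to\infty$, which exceeds $2/k$ for $k\ge 3$.

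Mean pooling happens to nearly cancel this single-channel example, but a two-channel variant does not. Take $\mathbf X=[\mathbf u_1,\mathbf u_2]$, identity weights in the first $L-1$ layers, and $\mathbf W^{(L)}=(1,-1)^\top$. The pooled output equals $1/(2\sqrt{k+1})$ on the original graph and tends to $0$ on the bridged one, since $\mathbf u_1-\mathbf u_2\perp\mathbf v'$ by symmetry. The claimed bound here is $4/k$, so it fails for large $k$ (roughly $k\ge 65$).

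So the correct general statement is your fallback with the explicit factor $L$. If you want the stated constant in the paper's configuration, keep the $w^{-1/2}$ from mean pooling that you discarded in (c). The graph-level logit change is then at most $(L/\sqrt w)(2m/k)\|\mathbf W_C\|_\sigma\prod_l\|\mathbf W^{(l)}\|_\sigma\|\mathbf X\|_F$, and $L/\sqrt w=3/\sqrt{30}<1$.

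On the operator bound itself, your hedge about the constant is unnecessary, and your renormalization estimate is pessimistic. Write $\hat d_i$ for the self-loop degree of $i$ in the graph without the edited edge, and assume all self-loop degrees are at least $k+1$ in both graphs. Each off-diagonal entry in row or column $i$ changes by at most $\hat d_l^{-1/2}\big(\hat d_i^{-1/2}-(\hat d_i+1)^{-1/2}\big)\le\big(2\sqrt{k+1}\,\hat d_i^{3/2}\big)^{-1}$. The diagonal entry changes by at most $\hat d_i^{-2}$, and at most $2\hat d_i$ entries are involved. Rows and columns $i,j$ therefore contribute only $O(k^{-3/2})$ in Frobenius norm, while the toggled pair contributes at most $\sqrt2/(k+2)$. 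Altogether, a single edit changes $\mathbf S$ by less than $2/k$ for every $k\ge 1$.

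The degree floor on the perturbed graph is genuinely needed. Lemma~\ref{lem:knn_properties} covers only the original graph, a point the paper's proof skips. To sum over $m$ edits by the triangle inequality, perform all insertions before all deletions. Then every intermediate graph has each node's degree at least $\min(\text{initial},\text{final})\ge k$.
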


\emph{Proof}
(i) Each normalized GCN layer is $\hat{\mathbf{D}}^{-1/2}\hat{\mathbf{A}}\hat{\mathbf{D}}^{-1/2}\mathbf{H}\mathbf{W}$; the symmetric normalized operator has spectral norm at most $1$, and ReLU is $1$-Lipschitz, so the per-layer Lipschitz constant is $\|\mathbf{W}^{(l)}\|_\sigma$ and they compose multiplicatively. (ii) By Lemma~\ref{lem:knn_properties} every degree is at least $k$, so each entry of $\hat{\mathbf{D}}^{-1/2}$ is at most $1/\sqrt{k}$; a single edge edit perturbs the operator by at most $2/k$ in Frobenius norm, and $m$ edits by at most $2m/k$. Propagating through the layers gives the bound.

\noindent Part (ii) shows graceful degradation: a few mis-assigned $k$-NN edges arising from feature noise cannot flip predictions abruptly, with the tolerance scaling as $1/k$.

\subsubsection{Data Augmentation Strategy for Class Imbalance}
Our strategy leverages the trained generator to create synthetic DDoS samples, balancing class distribution and improving the classifier's ability to recognize attack patterns. The augmented dataset combines original and synthetic samples as
\begin{equation}\label{eq:daug}
\mathcal{D}_{\text{aug}} = \mathcal{D}_{\text{real}} \cup \mathcal{D}_{\text{syn}},
\end{equation}
where $\mathcal{D}_{\text{real}} = \{(\mathcal{G}_i, y_i)\}_{i=1}^{N_{\text{real}}}$ contains original graphs and $\mathcal{D}_{\text{syn}} = \{(\mathcal{G}_{\text{syn}}^{(i)}, 1)\}_{i=1}^{N_{\text{syn}}}$ contains synthetic DDoS graphs. The number of synthetic samples is determined by:
\begin{equation}\label{eq:nsyn}
N_{\text{syn}} = \min(\max(0, N_0 - N_1), N_{\text{max}}),
\end{equation}
where $N_0$ and $N_1$ are benign and DDoS samples, and $N_{\text{max}}$ prevents excessive generation that may cause noise or overfitting. Each synthetic graph $\mathcal{G}_{\text{syn}}^{(i)}$ is built from a template structure of real DDoS samples, populated with generator-produced features:
\begin{equation}\label{eq:gsyn}
\mathcal{G}_{\text{syn}}^{(i)} = (\mathcal{V}_{\text{template}}, \mathcal{E}_{\text{template}}, G_\theta(\mathbf{Z}^{(i)})).
\end{equation}
Together, Eqs.~\eqref{eq:daug}--\eqref{eq:gsyn} define a principled augmentation pipeline that respects graph topology. The following corollary shows that this strategy provably reduces class imbalance, achieving perfect balance when the generation budget is sufficiently large.

\begin{corollary}[Class Balance Guarantee]
\label{cor:class_balance}
Let $N_0$ and $N_1$ denote the number of benign and DDoS samples, respectively, with $N_0 > N_1$ (class imbalance). After augmentation with $N_{\text{syn}} = \min(N_0 - N_1, N_{\text{max}})$ synthetic DDoS samples, the class imbalance ratio satisfies
$
\rho = \frac{N_1 + N_{\text{syn}}}{N_0} = \begin{cases} 1, & \text{if } N_0 - N_1 \leq N_{\text{max}}, \\ \frac{N_1 + N_{\text{max}}}{N_0}, & \text{otherwise}. \end{cases}
$
In particular, when $N_{\text{max}} \geq N_0 - N_1$, the augmented dataset achieves perfect class balance ($\rho = 1$).
\end{corollary}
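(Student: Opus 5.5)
The argument is a direct case analysis on the generation budget in Eq.~\eqref{eq:nsyn}. Under the standing hypothesis $N_0 > N_1$ we have $N_0 - N_1 > 0$. So the inner $\max(0, N_0 - N_1)$ of Eq.~\eqref{eq:nsyn} collapses to $N_0 - N_1$, and $N_{\text{syn}} = \min(N_0 - N_1, N_{\text{max}})$, exactly as written in the statement.

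Next I will count classes in $\mathcal{D}_{\text{aug}}$ from Eq.~\eqref{eq:daug}. Every synthetic graph in $\mathcal{D}_{\text{syn}}$ carries label $1$ by construction, and benign graphs are never synthesized. Hence the augmented DDoS count is $N_1 + N_{\text{syn}}$ and the benign count stays $N_0$. The union in Eq.~\eqref{eq:daug} should be read as a disjoint (multiset) union, since synthetic graphs carry generated features and are counted separately even if a template is reused. This gives $\rho = (N_1 + N_{\text{syn}})/N_0$.

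I then split into two cases.
\begin{itemize}
\item If $N_0 - N_1 \le N_{\text{max}}$, the minimum selects $N_0 - N_1$. Then $N_1 + N_{\text{syn}} = N_0$, so $\rho = 1$.
\item Otherwise $N_{\text{syn}} = N_{\text{max}}$, which gives $\rho = (N_1 + N_{\text{max}})/N_0$.
\end{itemize}
In the second case I will also note that $N_1 + N_{\text{max}} < N_0$, so $\rho < 1$. In both cases $\rho \ge N_1/N_0$ because $N_{\text{max}} \ge 0$. This justifies the phrase ``provably reduces class imbalance'': the ratio never decreases and is capped at $1$, so augmentation never overshoots into inverting the imbalance. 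The ``in particular'' clause is exactly the first case.

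There is no real obstacle here; the result is bookkeeping. The only points needing care are bookkeeping ones. One is confirming that the $\max(0,\cdot)$ clamp is inactive under $N_0 > N_1$. The other is making explicit that all synthetic samples enter the minority class (label $1$) and are counted as distinct samples.
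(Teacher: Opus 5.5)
Your proposal is correct and follows the only route available: the paper gives no separate proof and treats the corollary as immediate from Eq.~\eqref{eq:nsyn}. The clamp $\max(0,\cdot)$ is inactive under $N_0>N_1$, and the two cases of the minimum give the two branches of $\rho$. Your extra bookkeeping is sound and covers points the paper leaves unstated: synthetic graphs are counted as distinct label-$1$ samples, and $\rho \ge N_1/N_0$ relies on the implicit assumption $N_{\text{max}}\ge 0$.
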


\noindent By Theorem~\ref{thm:gan_convergence}, the synthetic samples generated at convergence are drawn from $p_G \approx p_{\text{data}}$, ensuring that the balanced dataset preserves the statistical properties of real DDoS traffic rather than introducing distributional artifacts.

Corollary~\ref{cor:class_balance} establishes that augmentation achieves class balance; the next result quantifies what that balance is \emph{worth} for generalization by connecting GAN training quality to the downstream classifier through the domain-adaptation bound of Ben-David et al.

\begin{theorem}[Generalization Gap Controlled by Discriminator Error]
\label{thm:gen}
Let $C_\psi$ be the GCN classifier trained on $\mathcal{D}_{\mathrm{aug}}= \mathcal{D}_{\mathrm{real}}\cup\mathcal{D}_{\mathrm{syn}}$, and let $R_{\mathrm{real}}(C)$ be its risk had it trained on an equally sized balanced set of real DDoS graphs. With $\varepsilon_D$ the optimal discriminator's real-vs-synthetic error rate at convergence and $\lambda^{\star}$ the joint risk of the best classifier over both distributions,
\[
R_{\mathrm{real}}(C) \;\le\; R_{\mathrm{aug}}(C) \;+\; 2\,(1-2\varepsilon_D) \;+\; \lambda^{\star}.
\]
\end{theorem}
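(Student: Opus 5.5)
We would treat this as an instance of the Ben-David et al.\ domain-adaptation bound, with the real balanced DDoS-graph distribution as the target domain $\mathcal{T}$ and the augmented distribution induced by $\mathcal{D}_{\mathrm{aug}}$ as the source domain $\mathcal{S}$. For a fixed classifier $C$ and the best joint hypothesis $h^{\star}=\arg\min_h [R_{\mathcal{S}}(h)+R_{\mathcal{T}}(h)]$, with $\lambda^{\star}=R_{\mathcal{S}}(h^{\star})+R_{\mathcal{T}}(h^{\star})$, we would first apply the triangle inequality for the $0$--$1$ disagreement:
\[
R_{\mathcal{T}}(C)\le R_{\mathcal{T}}(h^{\star})+\Pr_{\mathcal{T}}[C\neq h^{\star}]
\le R_{\mathcal{T}}(h^{\star})+\Pr_{\mathcal{S}}[C\neq h^{\star}]+\big|\Pr_{\mathcal{T}}[C\neq h^{\star}]-\Pr_{\mathcal{S}}[C\neq h^{\star}]\big|.
\]
We would then bound $\Pr_{\mathcal{S}}[C\neq h^{\star}]\le R_{\mathcal{S}}(C)+R_{\mathcal{S}}(h^{\star})$. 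This gives $R_{\mathrm{real}}(C)\le R_{\mathrm{aug}}(C)+\lambda^{\star}+\sup_{B}|\mathcal{T}(B)-\mathcal{S}(B)|$. To stay clean, we would bound the supremum by total variation instead of introducing an $\mathcal{H}\Delta\mathcal{H}$ class: $\sup_B|\mathcal{T}(B)-\mathcal{S}(B)|\le d_{\mathrm{TV}}(\mathcal{T},\mathcal{S})$. The usual factor of $2$ in the divergence term, $d_{\mathcal{H}\Delta\mathcal{H}}\le 2d_{\mathrm{TV}}$, leaves slack for the constant in the statement.

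The second step links total variation to discriminator error. The argument is the same as in Theorem~\ref{thm:optimal_disc}. With equal priors on real and synthetic graphs, the Bayes-optimal discriminator thresholds $D^{*}$ from Theorem~\ref{thm:optimal_disc} at $1/2$. Its error is then
\[
\varepsilon_D=\tfrac12\int\min(p_{\mathrm{data}},p_G)=\tfrac12\big(1-d_{\mathrm{TV}}(p_{\mathrm{data}},p_G)\big),
\]
so $d_{\mathrm{TV}}(p_{\mathrm{data}},p_G)=1-2\varepsilon_D$. This is consistent with Theorem~\ref{thm:gan_convergence}: when $p_G=p_{\mathrm{data}}$, we get $\varepsilon_D=1/2$ and the divergence term vanishes.

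The remaining step, and the main obstacle, is to relate $d_{\mathrm{TV}}(\mathcal{T},\mathcal{S})$ to $d_{\mathrm{TV}}(p_{\mathrm{data}},p_G)$. The augmented set is a mixture: benign real graphs, real DDoS graphs, and synthetic DDoS graphs. Only the synthetic DDoS component differs from the target, where that component would be replaced by real DDoS graphs. By joint convexity of total variation, we get $d_{\mathrm{TV}}(\mathcal{T},\mathcal{S})\le \pi_{\mathrm{syn}}\,d_{\mathrm{TV}}(p_{\mathrm{data}},p_G)\le 1-2\varepsilon_D$, where $\pi_{\mathrm{syn}}\le 1$ is the synthetic fraction. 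Combining this with the factor-$2$ form of the divergence term gives the stated $2(1-2\varepsilon_D)$.

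We expect two subtleties. First, Theorem~\ref{thm:residual} shows that the synthetic joint law is $p_G(\mathbf{X})p_{\mathrm{data}}(\mathbf{A})$ rather than $p_G$ alone. We would therefore interpret $\varepsilon_D$ as the error of the graph-level discriminator acting on the pair $(\mathbf{X},\mathbf{A})$, so that the identity above holds on the joint law. This interpretation is exactly what the GCN discriminator $D_\phi$ computes on $\mathcal{G}$. Second, the sampling terms are handled by reading $R_{\mathrm{aug}}$ and $R_{\mathrm{real}}$ as population risks under $\mathcal{S}$ and $\mathcal{T}$.
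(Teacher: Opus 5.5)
Your argument is correct, and its skeleton matches the paper's. Both apply the Ben-David et al.\ bound with the augmented distribution as source and the real balanced distribution as target, and both control the divergence term through $\varepsilon_D$. The difference is in the key step. The paper only asserts that the $\mathcal{H}\Delta\mathcal{H}$ divergence between augmented and real data is ``estimated by'' the proxy $\mathcal{A}$-distance $2(1-2\varepsilon_D)$. You instead do three things:
\begin{enumerate}
\item re-derive the bound with total variation in place of $\mathcal{H}\Delta\mathcal{H}$;
\item obtain the exact identity $d_{\mathrm{TV}}(p_{\mathrm{data}},p_G)=1-2\varepsilon_D$ from the closed-form $D^{*}$ of Theorem~\ref{thm:optimal_disc};
\item use the mixture structure of $\mathcal{D}_{\mathrm{aug}}$ to pass from real-vs-synthetic to augmented-vs-real, a step the paper never justifies.
\end{enumerate}
Your route buys rigor and a sharper result, $R_{\mathrm{real}}(C)\le R_{\mathrm{aug}}(C)+\pi_{\mathrm{syn}}(1-2\varepsilon_D)+\lambda^{\star}$. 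The statement then follows simply because $1-2\varepsilon_D\ge 0$, so you do not need to ``combine with the factor-2 form.'' The paper's factor $2$ appears to come from substituting the full $d_{\mathcal{A}}$ where Ben-David's bound carries $\tfrac12 d_{\mathcal{H}\Delta\mathcal{H}}$. The paper's route, in exchange, plugs directly into the standard $\mathcal{H}\Delta\mathcal{H}$ machinery for restricted hypothesis classes and finite-sample terms.

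You should state two assumptions explicitly.
\begin{enumerate}
\item $\varepsilon_D$ must be the error of the unconstrained Bayes-optimal discriminator. A capacity-limited GCN discriminator has larger error, and then $1-2\varepsilon_D$ can underestimate the TV distance.
\item The target must be $\mathcal{D}_{\mathrm{aug}}$ with the synthetic graphs swapped for real DDoS graphs, so the benign weight is unchanged. If $N_{\max}$ caps the augmentation, $\mathcal{D}_{\mathrm{aug}}$ is unbalanced while a ``balanced'' target is not. The class-proportion mismatch then enters the distance, and the bound fails even at $p_G=p_{\mathrm{data}}$. For a concrete failure, take disjoint class supports, $\lambda^{\star}=0$, and $C$ wrong on every DDoS graph.
\end{enumerate}
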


\emph{Proof}
Apply the Ben-David {\itshape et al.}\ target-risk bound \cite{bendavid2010theory} with the augmented distribution as source and the real balanced distribution as target. The $\mathcal{H}\Delta\mathcal{H}$ divergence between the two is estimated by the $\mathcal{A}$-distance $d_{\mathcal{A}} = 2(1-2\varepsilon_D)$, where $\varepsilon_D$ is the error of the optimal real-vs-synthetic discriminator. Substituting yields the bound.

\noindent As training approaches the equilibrium of Theorem~\ref{thm:gan_convergence} ($D^{\star}\!\to\!\tfrac12$, $\varepsilon_D\!\to\!\tfrac12$), the divergence term $2(1-2\varepsilon_D)\!\to\!0$ and the augmented-data classifier generalizes as well as one trained on real balanced data, up to the irreducible $\lambda^{\star}$. This makes the empirical fake detection rate of 45.3\% reported in the generator-architecture ablation (Section~\ref{results}) a direct, measurable proxy for the generalization gap.

\begin{corollary}[Real-Sample Efficiency]
\label{cor:samples}
Under the coordinated regime of Definition~\ref{def:coord}, each real DDoS window carries up to $\bar m$ correlated same-class labels rather than a single independent one. The effective sample size per window is inflated accordingly, so to reach a target risk, the number of \emph{real} DDoS samples required by graph augmentation is smaller than that required by vector augmentation by the corresponding factor.
\end{corollary}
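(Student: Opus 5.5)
The argument combines the coordination hypothesis of Definition~\ref{def:coord} with a standard effective-sample-size computation for correlated labels. The target risk is then reached by inverting a sample-complexity bound. There are three steps: (a) quantify how much label information one real DDoS window contributes under graph augmentation versus vector augmentation; (b) convert that information into an effective sample count; (c) feed the count into a uniform-convergence bound whose risk term decreases in the effective sample size.

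\textbf{Step (a).} In a real DDoS window $\mathcal{G}_t$, Lemma~\ref{lem:label_consistency} gives that at least a fraction $\alpha_t\ge\frac12$ of the flows carry the window label. Each node has $\bar m$ expected same-class neighbours (the quantity already used in Theorem~\ref{thm:advantage}). Vector augmentation treats each flow as an isolated draw, so it extracts only the marginal information $I(\mathbf{X}_i;Y_i)$ per sample. The graph pipeline instead retains the neighbourhood side information $Z_i$. In the proof of Theorem~\ref{thm:advantage} this was shown to satisfy $I(Y_i;Z_i\mid\mathbf{X}_i)\ge c\,\bar m$. By Proposition~\ref{prop:gap}, this extra term is invisible to feature-only models, and a synthetic graph built on a real template via Eq.~\eqref{eq:gsyn} inherits it through the shared topology.

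\textbf{Step (b).} Model the window as a cluster of $\bar m+1$ positively dependent labels and define an effective sample size $n_{\mathrm{eff}}$ per window in the design-effect sense. I would first bound the per-window information by the chain rule,
$I(\mathbf{X}_t,\mathbf{A}_t;Y_t)\ge I(\mathbf{X}_i;Y_i)+c\,\bar m,$
and then set $n_{\mathrm{eff}}=1+c\,\bar m/I(\mathbf{X}_i;Y_i)$. This makes the inflation factor explicit and strictly larger than one whenever $c>0$.

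\textbf{Step (c).} Apply a generic bound of the form $R\le \hat R+O(\sqrt{\mathrm{comp}/n})$ to the classifier, or use the augmented-data bound of Theorem~\ref{thm:gen} with $\varepsilon_D\to\frac12$. Setting the excess risk equal to the target and solving for $n$ shows that the graph approach needs about $n_{\mathrm{vec}}/n_{\mathrm{eff}}$ real windows. Corollary~\ref{cor:class_balance} then fixes the synthetic budget, so only the real count differs between the two pipelines.

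The main obstacle is step (b). Mutual information with dependent labels does not translate directly into an i.i.d.-equivalent sample count. Moreover, positive label correlation ordinarily \emph{deflates} the effective sample size rather than inflating it. To resolve this, I would argue that the relevant quantity is the labelled information per window, not independent replicates. Concretely, I would use a Fano-type minimax lower bound in which the number of samples enters only through the total mutual information, so that information per window plays the role of the sample count. If that fails, I would weaken the statement to the claim "smaller by a factor at least $1+c\bar m/I(\mathbf{X};Y)$" and prove that version only.
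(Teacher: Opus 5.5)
There is no proof in the paper to compare against; the corollary is stated as an informal consequence of Theorem~\ref{thm:advantage}. Your argument therefore has to carry the full weight, and its quantitative core, step (b), fails.

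If $Y_t$ is the window label of Eq.~\eqref{eq:label}, then $I(\mathbf{X}_t,\mathbf{A}_t;Y_t)\le H(Y_t)\le\log 2$. That cannot dominate $I(\mathbf{X}_i;Y_i)+c\,\bar m$ once $c\,\bar m>\log 2$. Moreover, $\mathbf{A}_t$ is a deterministic function of $\mathbf{X}_t$, so $I(\mathbf{A}_t;Y_t\mid\mathbf{X}_t)=0$ and the chain rule produces no topology term at all. If $Y_t$ is instead the vector of labels in the window, the chain rule already gives, for two connected flows,
\[
I(\mathbf{X}_i,\mathbf{X}_j;Y_i,Y_j)=I(\mathbf{X}_i,\mathbf{X}_j;Y_i)+I(\mathbf{X}_i,\mathbf{X}_j;Y_j)+c-I(Y_i;Y_j).
\]
So $c$ enters only against the unconditional label dependence $I(Y_i;Y_j)$, which Definition~\ref{def:coord} does not control. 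This is exactly the deflation you anticipated; counting ``labelled information per window'' reproduces it rather than resolving it.

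The bound $I(Y_i;Z_i\mid\mathbf{X}_i)\ge c\,\bar m$ that you attribute to the proof of Theorem~\ref{thm:advantage} is only asserted there, for three reasons. It does not follow from Definition~\ref{def:coord}, which is pairwise and conditions on $(\mathbf{X}_i,\mathbf{X}_j)$ rather than on $\mathbf{X}_i$ alone. Mutual information with dependent neighbour labels is not additive: if all neighbours share one label, $Z_i$ is worth a single neighbour. And it is capped by $H(Y_i\mid\mathbf{X}_i)\le\log 2$. Also, $Z_i$ consists of neighbour \emph{labels}, which the GCN never receives; it sees $(\mathbf{X}_t,\mathbf{A}_t)$ and one majority-vote label per window.

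The gap is in the statement itself, not just your proof. Give each window a regime $B\sim\mathrm{Bernoulli}(\tfrac12)$, set every flow label to $B$, and draw features independently of $B$. The flows are then $\eta$-ambiguous with $\eta=\tfrac12$ and coordinated with $c=\log 2$. Yet $(\mathbf{X}_t,\mathbf{A}_t)$ is independent of every label, so no graph classifier beats a vector one at any sample size. Meanwhile your factor $1+c\,\bar m/I(\mathbf{X}_i;Y_i)$ is infinite, because $I(\mathbf{X}_i;Y_i)=0$. A provable version needs a hypothesis tying neighbours' \emph{features} to a flow's label, and an explicit definition of what counts as a ``sample'' in each pipeline. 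Your fallback is not a weakening; it is a sharper restatement of the same unsupported inequality.

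Step (c) does not close either. Theorem~\ref{thm:gen} has no sample-size term, so it cannot be inverted for $n$. A generic $\hat R+O(\sqrt{\mathrm{comp}/n})$ bound needs i.i.d.\ samples, but overlapping windows ($s<w$) are dependent. It also yields the ratio $n_{\mathrm{vec}}/n_{\mathrm{eff}}$ only if the GCN and vector hypothesis classes share the same complexity term, which you do not address. The Fano route gives only a converse, a lower bound on what \emph{every} method needs. It could bound the vector pipeline's requirement from below, but showing that the graph pipeline reaches the target with fewer real samples requires an achievability bound, which Fano cannot provide.

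Two smaller points. Your claim in step (a) that synthetic graphs inherit the neighbourhood information through the shared template contradicts Theorem~\ref{thm:residual}. Generating features independently of the template is precisely what destroys feature--topology dependence, unless you impose the topology-stable hypothesis of Corollary~\ref{cor:stable}. And Corollary~\ref{cor:class_balance} does not decouple the synthetic budget from the real count, since $N_{\mathrm{syn}}=\min(N_0-N_1,N_{\max})$ depends on $N_1$.
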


\subsubsection{Classification Training Objective}
The classifier is trained using the cross-entropy loss function, which is well-suited for probabilistic classification tasks and provides gradient information that encourages confident predictions on correctly classified samples while heavily penalizing confident misclassifications. The training objective minimizes the cross-entropy loss over the augmented dataset as
\begin{equation}
\mathcal{L}_C = -\frac{1}{|\mathcal{D}_{\text{aug}}|}\sum_{(\mathcal{G}, y) \in \mathcal{D}_{\text{aug}}} \sum_{c=0}^{1} y_c \log P(y=c|\mathcal{G}),
\end{equation}
where $y_c$ represents the one-hot encoding of the true label, such that $y_c = 1$ if $c$ is the correct class and $y_c = 0$ otherwise. This formulation ensures that confident, correct predictions yield small updates, while uncertain or incorrect predictions drive stronger corrective updates. Indeed, for the softmax output, the gradient of the cross-entropy loss with respect to the pre-activation logits $\mathbf{o} = \mathbf{W}_C \mathbf{h}_{C,\text{pool}} + \mathbf{b}_C$ reduces to the well-known compact form
\begin{equation}
\frac{\partial \mathcal{L}_C}{\partial \mathbf{o}} = \frac{1}{|\mathcal{D}_{\text{aug}}|}\sum_{(\mathcal{G}, y) \in \mathcal{D}_{\text{aug}}} \big(P(y \mid \mathcal{G}) - \mathbf{y}\big),
\end{equation}
where $\mathbf{y}$ is the one-hot label vector. The update is thus proportional to the prediction error $P(y \mid \mathcal{G}) - \mathbf{y}$: it vanishes when the predicted distribution matches the true label and grows toward its maximum for confident misclassifications, supplying strong corrective gradients precisely where the classifier errs.

\section{Experimental Results}\label{results}
In this section, we present the experimental settings, quantitative results, and comprehensive ablation studies of the proposed GraphGAN framework across four benchmark datasets.

\subsection{Experimental Settings and Datasets}
We evaluate GraphGAN on four widely used network intrusion detection benchmarks to demonstrate its generalizability:

\noindent\textbf{CIC-IDS-2017} contains approximately 2.8 million flow records collected over seven days, encompassing 14 attack types (e.g., DDoS, PortScan, Web Attack, Infiltration, Botnet) and normal flows. Each record includes 80 network flow features.

\noindent\textbf{CIC-IDS-2018} extends the 2017 dataset with approximately 16--17 million records collected over 10 days, covering seven attack scenarios, including Brute Force, DoS, DDoS, Web Attack, Infiltration, Botnet, and Heartbleed, and includes 80 extracted flow-level features.

\noindent\textbf{UNSW-NB15} comprises approximately 2.5 million records generated in a hybrid real and synthetic environment, containing nine attack categories (Fuzzers, Analysis, Backdoors, DoS, Exploits, Generic, Reconnaissance, Shellcode, Worms) with 49 flow-based and content-based features.

\noindent\textbf{ToN-IoT} is an IoT/IIoT-focused dataset with approximately 461,000 records collected from heterogeneous IoT services (weather, fridge, garage door, GPS, modbus, light, motion, thermostat), encompassing nine attack types including DDoS, DoS, ransomware, backdoor, injection, XSS, password cracking, scanning, and man-in-the-middle, with 44 features.

For all datasets, the GraphGAN framework employs consistent hyperparameters unless otherwise stated. The generator synthesizes node features from Gaussian noise ($z=128$), while the discriminator operates on graph-structured data using three GCN layers. Graph construction uses $k$-NN with $k=5$, window size $w=30$, and step size $s=10$. The GCN-based classifier uses three graph convolutional layers, followed by global mean pooling and fully connected layers, and is trained with the Adam optimizer (learning rate $0.001$), dropout regularization ($p=0.3$), and cross-entropy loss. All models were implemented in PyTorch with the PyTorch Geometric backend and trained on a GPU with a batch size of $32$ for up to $50$ epochs. To prevent temporal leakage arising from overlapping sliding windows ($s<w$), we order each dataset chronologically and partition the raw flow stream into 60\% training, 20\% validation, and 20\% testing segments \emph{before} graph construction. Sliding windows are then generated independently within each segment, so that no window crosses a split boundary and no raw flow is shared across splits; a guard band of $w$ flows is discarded at each boundary to eliminate residual overlap. This chronological protocol (training on earlier traffic and testing on later traffic) reflects realistic deployment and avoids the inflated accuracy that random window-level splitting can produce. Performance is evaluated primarily using classification accuracy, while precision, recall, and F1-score are reported in subsequent analyses. We stress that all baselines are re-implemented under an identical, leakage-controlled protocol: the flow stream is ordered chronologically and partitioned \emph{before} windowing, with a guard band discarded at each boundary. This is a strictly harder evaluation than the random, window-level splits under which several published graph baselines report near-saturated accuracy; accordingly, our reproduced figures for these baselines are lower than their originally reported values, and the gap reflects protocol rigor rather than implementation disadvantage.

\begin{table*}[!b]
\centering
\caption{Comparative analysis of GraphGAN against state-of-the-art methods across four benchmark datasets (test accuracy \%).}
\label{tab:comparison}
\resizebox{0.7\textwidth}{!}{%
\begin{tabular}{lcccc}
\toprule
\textbf{Method} & \textbf{CIC-IDS-2017} & \textbf{CIC-IDS-2018} & \textbf{UNSW-NB15} & \textbf{ToN-IoT} \\
\midrule
Causal Deep Learning \cite{zeng2022improving} & 82.35 & 81.23 & 79.56 & 80.89 \\
CNN \cite{najar2024cyber} & 90.56 & 89.78 & 87.34 & 88.56 \\
GRU-BiLSTM \cite{al2025deep} & 90.35 & 89.45 & 87.12 & 88.34 \\
SMOTE Oversampling \cite{chawla2002smote} & 84.72 & 84.08 & 82.19 & 82.87 \\
Vanilla GAN-based Classifier & 86.43 & 85.67 & 83.45 & 84.56 \\
WGAN \cite{arafah2025anomaly} & 88.46 & 87.92 & 85.76 & 86.48 \\
VAE-GAN \cite{tian2024vae} & 89.23 & 88.64 & 86.31 & 87.12 \\
DDP-DAR (Diffusion) \cite{cai2025ddp} & 91.48 & 90.76 & 88.95 & 89.62 \\
E-GraphSAGE \cite{lo2022graphsage} & 92.03 & 91.35 & 89.41 & 90.18 \\
BS-GAT \cite{wang2025bs} & 93.18 & 92.47 & 90.86 & 91.73 \\
Ensemble GNN \cite{bakar2024ftg} & 92.55 & 91.89 & 89.78 & 90.67 \\
\midrule
\textbf{GraphGAN (Proposed)} & \textbf{95.31} & \textbf{94.87} & \textbf{93.42} & \textbf{94.56} \\
\bottomrule
\end{tabular}}
\end{table*}

\subsection{Quantitative Results and Analysis}
To ensure a fair and comprehensive evaluation, we compare GraphGAN against representative state-of-the-art methods spanning different architectural paradigms in network intrusion detection.
Causal Deep Learning \cite{zeng2022improving} represents feature-level causal modeling approaches designed to enhance interpretability and robustness. CNN \cite{najar2024cyber} captures spatial correlations in features through convolutional filters and serves as a strong deep learning baseline for tabular traffic data. GRU-BiLSTM \cite{al2025deep} models sequential dependencies in traffic flows using recurrent architectures. The Vanilla GAN-based classifier integrates adversarial training without a graph structure to evaluate whether generative augmentation alone is sufficient. Ensemble GNN \cite{bakar2024ftg} represents advanced graph-based detection approaches without adversarial augmentation. To stress-test our claims against the strongest generative and graph paradigms, we additionally include: SMOTE oversampling \cite{chawla2002smote} as a classical interpolation-based augmenter; a Wasserstein GAN \cite{arafah2025anomaly}, a VAE-GAN \cite{tian2024vae}, and the diffusion-based DDP-DAR \cite{cai2025ddp} as advanced vectorized generative augmenters; and E-GraphSAGE \cite{lo2022graphsage} and BS-GAT \cite{wang2025bs} as state-of-the-art graph neural baselines.
All baselines were implemented using identical preprocessing and data-splitting protocols to ensure a fair comparison.

We evaluated all baseline methods and GraphGAN across all four benchmark datasets using 60\% of the training data, and the results are shown in Table~\ref{tab:comparison}. On CIC-IDS-2017, GraphGAN achieves 95.31\% accuracy, substantially outperforming all competitors. The pattern is consistent across datasets: GraphGAN attains 94.87\% on CIC-IDS-2018, 93.42\% on UNSW-NB15, and 94.56\% on ToN-IoT. Traditional methods show limited effectiveness across all benchmarks: Causal Deep Learning (79.56\%--82.35\%), CNN (87.34\%--90.56\%), GRU-BiLSTM (87.12\%--90.35\%), and standalone GAN (83.45\%--86.43\%). Among the baselines, Ensemble GNN is the strongest, achieving 89.78\%--92.55\% accuracy but remaining 2.76--3.64 percentage points below GraphGAN across all datasets. GraphGAN's superiority stems from three synergistic innovations: temporal graph construction via $k$-NN topology that captures coordinated attack patterns, adversarial training that generates realistic minority samples while preserving topological relationships, and GCN-based classification that leverages structural patterns for robust boundaries. The consistent improvements across diverse datasets confirm the generalizability of our approach.

\subsection{Necessity of Data Augmentation Using GraphGAN}
While GraphGAN demonstrates strong overall accuracy, we further investigate whether its advantage stems specifically from adversarial augmentation in the presence of class imbalance. To validate the necessity of adversarial augmentation, we conducted ablation studies on CIC-IDS-2017 across varying class imbalance ratios (1:2, 1:5, 1:10, and 1:20), comparing GraphGAN with full augmentation against baselines without augmentation and with random feature generation. As shown in Table~\ref{tab:ablation}, GraphGAN consistently outperforms both variants across all metrics. At moderate imbalance (1:2), GraphGAN achieves 95.31\% accuracy, a 1.55 point improvement over the non-augmented baseline (93.76\%). The performance gap widens dramatically as imbalance intensifies: at 1:10 ratio, GraphGAN maintains 91.82\% accuracy while the baseline degrades to 78.91\% (12.91 point gap), and at extreme imbalance (1:20), GraphGAN achieves 89.47\% compared to 67.23\% without augmentation (22.24 point improvement). Random augmentation provides only marginal benefits (2.91 points at 1:20) and fails to capture distributional and topological properties. These results confirm that GraphGAN's adversarial training generates topology-aware synthetic samples that preserve both statistical fidelity and structural relationships.

\begin{table}[!t]
\centering
\caption{Quantitative Result of Class Imbalance Ratio for \emph{GraphGAN} on CIC-IDS-2017}
\label{tab:ablation}
\resizebox{\linewidth}{!}{%
\begin{tabular}{llcccc}
\toprule
\textbf{Ratio} & \textbf{Model Variant} & \textbf{Accuracy} & \textbf{Precision} & \textbf{Recall} & \textbf{F1 Score} \\
\midrule
\multirow{3}{*}{\textbf{1:2}}
& GraphGAN (Full) & \textbf{95.31} & \textbf{95.58} & \textbf{95.31} & \textbf{95.07} \\
& w/o Augmentation & 93.76 & 94.12 & 93.76 & 93.52 \\
& + Random & 93.89 & 94.28 & 93.89 & 93.64 \\
\midrule
\multirow{3}{*}{\textbf{1:5}}
& GraphGAN (Full) & \textbf{93.58} & \textbf{93.92} & \textbf{93.58} & \textbf{93.41} \\
& w/o Augmentation & 87.34 & 88.16 & 87.34 & 86.92 \\
& + Random & 88.12 & 88.79 & 88.12 & 87.68 \\
\midrule
\multirow{3}{*}{\textbf{1:10}}
& GraphGAN (Full) & \textbf{91.82} & \textbf{92.26} & \textbf{91.82} & \textbf{91.54} \\
& w/o Augmentation & 78.91 & 80.47 & 78.91 & 78.23 \\
& + Random & 80.67 & 81.92 & 80.67 & 79.94 \\
\midrule
\multirow{3}{*}{\textbf{1:20}}
& GraphGAN (Full) & \textbf{89.47} & \textbf{90.15} & \textbf{89.47} & \textbf{89.08} \\
& w/o Augmentation & 67.23 & 71.34 & 67.23 & 65.87 \\
& + Random & 70.14 & 73.28 & 70.14 & 68.79 \\
\bottomrule
\end{tabular}}
\end{table}

\begin{table}[!b]
\caption{Impact of (a) Graph Construction Method and (b) Pooling Strategy on Accuracy (\%)}
\label{tab:construction_pooling}
\centering
\begin{minipage}[t]{0.48\linewidth}
\centering
\footnotesize
\textbf{(a) Graph Construction}\\[3pt]
\resizebox{\linewidth}{!}{%
\begin{tabular}{@{}lcccc@{}}
\toprule
\textbf{Method} & \textbf{CIC17} & \textbf{CIC18} & \textbf{NB15} & \textbf{ToN} \\
\midrule
$k$-NN & \textbf{95.31} & \textbf{94.87} & \textbf{93.42} & \textbf{94.56} \\
Fully Conn. & 92.87 & 92.34 & 90.78 & 91.93 \\
Threshold & 91.43 & 90.91 & 89.56 & 90.67 \\
Random & 88.76 & 88.12 & 87.34 & 88.45 \\
\bottomrule
\end{tabular}}
\end{minipage}
\hfil
\begin{minipage}[t]{0.48\linewidth}
\centering
\footnotesize
\textbf{(b) Pooling Strategy}\\[3pt]
\resizebox{\linewidth}{!}{%
\begin{tabular}{@{}lcccc@{}}
\toprule
\textbf{Method} & \textbf{CIC17} & \textbf{CIC18} & \textbf{NB15} & \textbf{ToN} \\
\midrule
Mean & \textbf{95.31} & \textbf{94.87} & \textbf{93.42} & \textbf{94.56} \\
Max & 93.78 & 93.23 & 91.89 & 92.78 \\
Sum & 92.91 & 92.34 & 91.12 & 92.23 \\
Attention & 94.67 & 94.12 & 92.78 & 93.89 \\
\bottomrule
\end{tabular}}
\end{minipage}
\end{table}

\subsection{Impact of Different Training Ratio}
To evaluate the robustness and data efficiency of our proposed GraphGAN framework, we conducted experiments with training data ratios ranging from 10\% to 90\%, while maintaining a consistent test set composition and preserving the original class distribution. As illustrated in Fig.~\ref{fig:training_size_graphgan}, GraphGAN consistently achieves the highest accuracy across all training ratios, with particularly pronounced advantages under low-data regimes, where traditional deep learning approaches experience significant performance degradation. The superior data efficiency stems from a graph construction mechanism that captures stable flow relationships regardless of the training set size, an adversarial augmentation that generates high-quality synthetic samples while preserving topological relationships, and a GCN-based architecture that leverages parameter sharing to improve generalization under data-limited conditions.

\begin{figure}[!t]
    \centering
    \begin{minipage}[b]{0.48\linewidth}
        \centering
        \includegraphics[width=\linewidth]{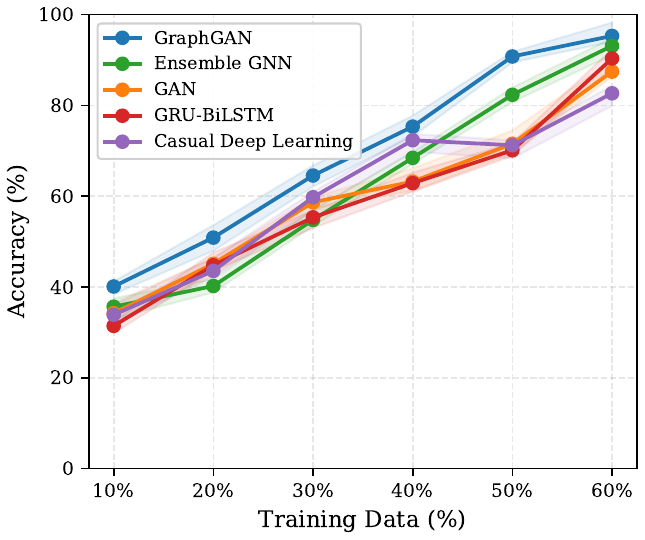}\\[2pt]
        {\small (a) Accuracy}
    \end{minipage}
    \hfil
    \begin{minipage}[b]{0.48\linewidth}
        \centering
        \includegraphics[width=\linewidth]{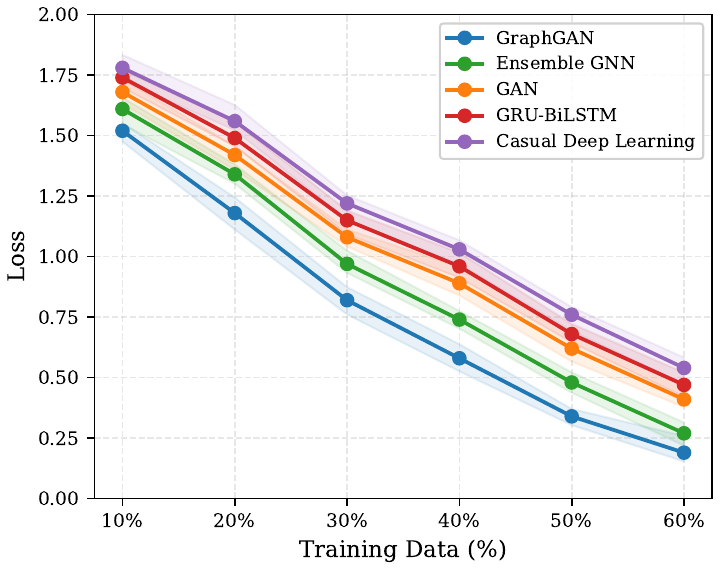}\\[2pt]
        {\small (b) Loss}
    \end{minipage}
    \caption{Impact of training data ratio on (a) accuracy and (b) loss of data-driven methods, including GraphGAN.}
    \label{fig:training_size_graphgan}
\end{figure}

\subsection{Impact of Graph Construction Method}
To validate our $k$-NN graph construction approach, we compared it against alternative graph topologies: (1) fully connected graphs, where all nodes within a temporal window are connected; (2) threshold-based graphs, where edges are formed only if similarity exceeds a fixed threshold; and (3) random graphs with the same average degree. As shown in Table~\ref{tab:construction_pooling}(a), the $k$-NN approach consistently achieves the highest accuracy across all four datasets, reaching 95.31\% on CIC-IDS-2017, 94.87\% on CIC-IDS-2018, 93.42\% on UNSW-NB15, and 94.56\% on ToN-IoT. Fully connected graphs suffer from over-connectivity, which introduces noise (90.78\%--92.87\%); threshold-based methods fragment the graph into disconnected components (89.56\%--91.43\%); and random graphs lack meaningful structural information (87.34\%--88.76\%). The $k$-NN approach optimally balances the preservation of local structure and computational efficiency across diverse network environments.

\subsection{Effect of Window Size and Step Size}
We investigated the impact of temporal window size $w \in \{10, 20, 30, 40, 50\}$ and step size $s \in \{5, 10, 15, 20\}$ on detection performance of our proposed method (GraphGAN) across all four datasets. As illustrated in Fig.~\ref{fig:window_step}, the configuration $w=30$ and $s=10$ yields optimal results across all benchmarks: 95.31\% on CIC-IDS-2017, 94.87\% on CIC-IDS-2018, 93.42\% on UNSW-NB15, and 94.56\% on ToN-IoT. Smaller windows ($w=10$) fail to capture extended attack patterns (89.56\%--91.23\%), whereas larger windows ($w=50$) introduce noise from unrelated flows (91.45\%--92.67\%). Regarding the step size, $s=10$ provides the best trade-off: excessively large steps ($s=20$) may miss transitional attack patterns, whereas very small steps ($s=5$) produce redundant, overlapping windows without significant performance improvement. These trends remain consistent across all datasets.

\begin{figure}[!t]
    \centering
    \includegraphics[width=0.9\linewidth]{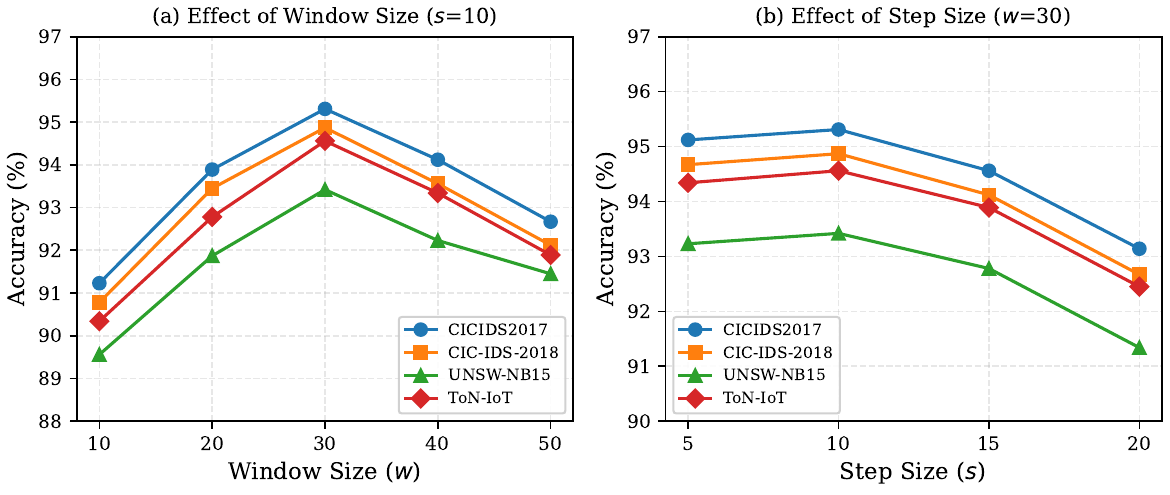}
    \caption{Effect of (a) window size $w$ and (b) step size $s$ on detection accuracy of our proposed GraphGAN.}
    \label{fig:window_step}
\end{figure}

\subsection{Importance of the Number of Nearest Neighbors ($k$)}
The choice of $k$ in $k$-NN graph construction critically affects the quality of the resulting topology. We evaluated $k \in \{3, 5, 7, 10, 15\}$ across all four datasets, with results presented in Fig.~\ref{fig:hyperparams}(a). Across all benchmarks, $k=5$ consistently provides the optimal balance between capturing local neighborhood structure and maintaining graph connectivity, achieving 95.31\% on CIC-IDS-2017, 94.87\% on CIC-IDS-2018, 93.42\% on UNSW-NB15, and 94.56\% on ToN-IoT. Lower values ($k=3$) produce overly sparse graphs with disconnected components (90.67\%--92.45\%), thereby limiting effective message passing. Higher values ($k=15$) introduce noisy long-range connections that dilute meaningful local patterns (91.78\%--93.78\%). The consistent optimality of $k=5$ across diverse datasets confirms that this connectivity level effectively captures feature-based similarities without degrading graph topology.

\subsection{Analysis of GCN Layer Depth}
We examined the impact of GCN depth by varying the number of graph convolutional layers in both the discriminator and the classifier from 1 to 5. As shown in Fig.~\ref{fig:hyperparams}(b), the three-layer architecture achieves the highest accuracy across all datasets: 95.31\% on CIC-IDS-2017, 94.87\% on CIC-IDS-2018, 93.42\% on UNSW-NB15, and 94.56\% on ToN-IoT. Shallow networks (1--2 layers) capture only immediate neighborhood information, thereby missing multi-hop structural patterns essential for distributed attack detection. Conversely, deeper networks (4--5 layers) suffer from over-smoothing, in which node representations become increasingly indistinguishable, as well as from gradient vanishing effects that impede effective training. The three-layer configuration provides an optimal balance between receptive field expansion and representation discrimination, a finding that remains consistent across all four benchmarks.

\subsection{Study of Generator Architecture Complexity}
To assess the necessity of the proposed multi-layer generator architecture, we compared variants with different depths: 2-layer (linear output), 3-layer (a 4-layer architecture without one hidden layer), 4-layer (the proposed architecture), and 5-layer. The 4-layer generator produces the highest-quality synthetic samples, as measured by discriminator confusion (45.3\% fake detection rate) and downstream classifier performance (95.31\% accuracy). Simpler 2-layer generators produce easily distinguishable synthetic samples (78.2\% fake detection rate, 89.45\% classifier accuracy), while 5-layer generators exhibit training instability and only marginal improvements (47.1\% fake detection rate, 94.89\% accuracy), indicating diminishing returns beyond four layers.

\begin{figure}[!t]
\centering
\begin{minipage}[b]{0.48\linewidth}
    \centering
    \includegraphics[width=\linewidth]{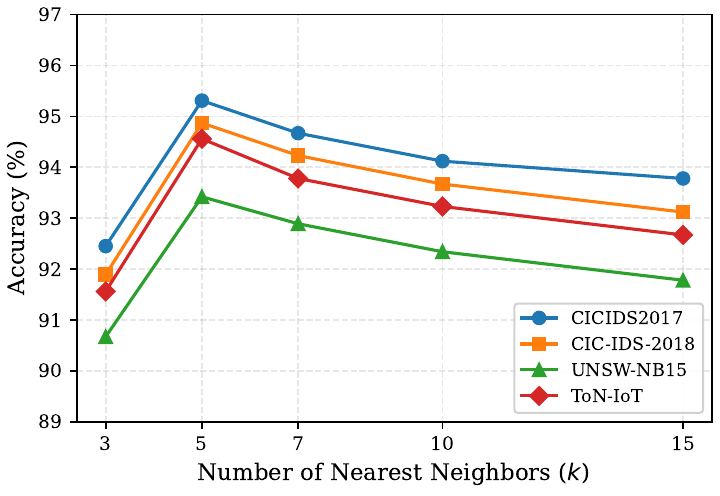}\\[2pt]
    {\small (a)}
\end{minipage}
\hfil
\begin{minipage}[b]{0.48\linewidth}
    \centering
    \includegraphics[width=\linewidth]{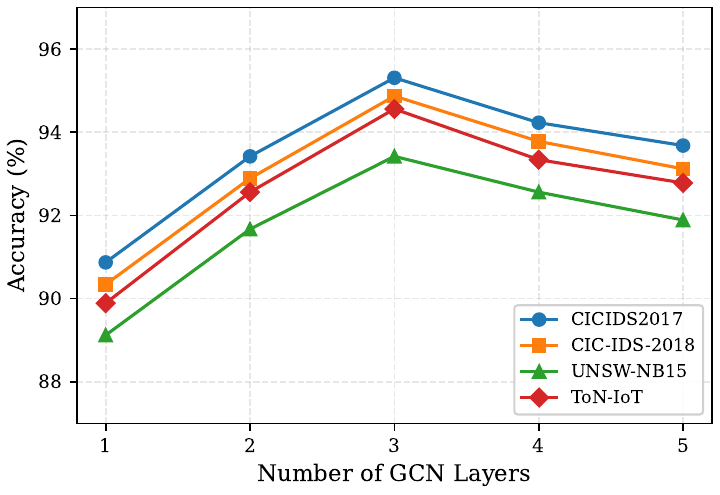}\\[2pt]
    {\small (b)}
\end{minipage}
\caption{Effect of hyperparameters on detection accuracy of our proposed GraphGAN: (a) number of nearest neighbors $k$, and (b) GCN layer depth.}
\label{fig:hyperparams}
\end{figure}

\subsection{Aggregation Function in Graph Pooling}
We compared different graph-level pooling strategies across all four datasets: global mean pooling (proposed), global max pooling, global sum pooling, and attention-based pooling. As shown in Table~\ref{tab:construction_pooling}(b), mean pooling consistently achieves the highest accuracy across all benchmarks: 95.31\% on CIC-IDS-2017, 94.87\% on CIC-IDS-2018, 93.42\% on UNSW-NB15, and 94.56\% on ToN-IoT. Max pooling (91.89\%--93.78\%) focuses on extreme features but loses information about overall flow patterns. Sum pooling (91.12\%--92.91\%) is sensitive to variations in graph size and introduces scale inconsistencies. Attention-based pooling (92.78\%--94.67\%) shows promise but introduces additional parameters and training complexity without sufficient performance gains to justify the computational overhead.

\section{Concluding Remarks}\label{conclusion}
We introduced GraphGAN, an adversarial graph neural network framework that models network traffic as temporal graphs and employs generative augmentation to address class imbalance and data scarcity in DDoS detection. Through temporal graph construction, adversarial augmentation, and GCN-based classification, GraphGAN achieves superior performance compared to existing CNN-LSTM variants across four diverse benchmarks, including CIC-IDS-2017, CIC-IDS-2018, UNSW-NB15, and ToN-IoT, with consistent improvements in accuracy, precision, and recall. Comprehensive ablation studies confirm the robustness of key design choices, including $k$-NN graph construction, three-layer GCN depth, a noise dimension of 128, and mean pooling, across all evaluated datasets. Key limitations include reliance on feature-similarity assumptions in graph construction and training instabilities inherent to adversarial architectures. A detailed system-architecture instantiation and its runtime/mitigation-cost evaluation are left as future work. Future research directions encompass multi-vector attack classification, hierarchical graph structures, federated learning frameworks, explainable threat attribution mechanisms, and extension to broader network security domains, including encrypted traffic analysis and IoT anomaly detection.

\section*{Data Availability}
The datasets analyzed in this study are publicly available benchmark datasets: CIC-IDS-2017, CIC-IDS-2018, UNSW-NB15, and ToN-IoT. Code to reproduce the experiments is available from the authors on reasonable request.

\bibliographystyle{IEEEtran}
\bibliography{references}

@ARTICLE{Ansari_Zerotrust,
  author    = {Hichem Sedjelmaci and Nirwan Ansari},
  title     = {Zero Trust Architecture Empowered Attack Detection Framework to Secure 6G Edge Computing},
  journal   = {IEEE Network},
  year      = {2024},
  volume    = {38},
  number    = {1},
  pages     = {196--202},
  month     = jan,
  publisher = {IEEE Press},
  issn      = {0890-8044},
  doi       = {10.1109/MNET.131.2200513}
}

@article{kullback1951information,
  author  = {Solomon Kullback and Richard A. Leibler},
  title   = {On Information and Sufficiency},
  journal = {The Annals of Mathematical Statistics},
  year    = {1951},
  volume  = {22},
  number  = {1},
  pages   = {79--86},
  doi     = {10.1214/aoms/1177729694},
  publisher = {Institute of Mathematical Statistics}
}

@article{bendavid2010theory,
  author    = {Shai Ben-David and John Blitzer and Koby Crammer and Alex Kulesza and Fernando Pereira and Jennifer Wortman Vaughan},
  title     = {A Theory of Learning from Different Domains},
  journal   = {Machine Learning},
  year      = {2010},
  volume    = {79},
  number    = {1--2},
  pages     = {151--175},
  doi       = {10.1007/s10994-009-5152-4},
  publisher = {Springer}
}

@ARTICLE{Nishanth_DDoS,
  author={Nishanth, N. and Mujeeb, A.},
  journal={IEEE Transactions on Cognitive Communications and Networking}, 
  title={Modeling and Detection of Flooding-Based Denial of Service Attacks in Wireless Ad Hoc Networks Using Uncertain Reasoning}, 
  year={2021},
  volume={7},
  number={3},
  pages={893-904},
  doi={10.1109/TCCN.2021.3055503}}

@ARTICLE{Hossain_SL,
  author={Hossain, Mohammad Arif and Sadat, Nazmus and Ansari, Nirwan and Amsaad, Fathi},
  journal={IEEE Transactions on Cognitive Communications and Networking}, 
  title={Split Learning Over NOMA-Enabled HetNets for Scalable IoT-Based Precision Agriculture}, 
  year={2026},
  volume={12},
  number={},
  pages={7145-7156},
  doi={10.1109/TCCN.2026.3683140}}

@ARTICLE{Wang_DDoS,
  author={Wang, Yipeng and Zhang, Xintong and Lai, Yingxu and Zhao, Zijian and Deng, Yongjian},
  journal={IEEE Transactions on Cognitive Communications and Networking}, 
  title={Hifoots: A Highly Efficient DDoS Attack Detection Scheme Deployed in Smart IoT Homes}, 
  year={2025},
  volume={11},
  number={1},
  pages={519-533},
  doi={10.1109/TCCN.2024.3424888}}

@ARTICLE{Shahraki_Net_Traffic,
  author={Shahraki, Amin and Abbasi, Mahmoud and Taherkordi, Amir and Jurcut, Anca Delia},
  journal={IEEE Transactions on Cognitive Communications and Networking}, 
  title={Active Learning for Network Traffic Classification: A Technical Study}, 
  year={2022},
  volume={8},
  number={1},
  pages={422-439},
  doi={10.1109/TCCN.2021.3119062}}

@ARTICLE{Liu_RL,
  author={Liu, Weiqi and Hossain, Mohammad Arif and Ansari, Nirwan and Kiani, Abbas and Saboorian, Tony},
  journal={IEEE Transactions on Network and Service Management}, 
  title={Reinforcement Learning-Based Network Slicing Scheme for Optimized UE-QoS in Future Networks}, 
  year={2024},
  volume={21},
  number={3},
  pages={3454-3464},
  doi={10.1109/TNSM.2024.3368294}}

@article{hossain2023decentralized,
  title={A decentralized collaborative learning approach in {5G+} core networks},
  author={Hossain, Mohammad Arif and Hossain, Abdullah Ridwan and Liu, Weiqi and Ansari, Nirwan and Kiani, Abbas and Saboorian, Tony},
  journal={IEEE Network},
  volume={38},
  number={1},
  pages={288--295},
  year={2023},
  publisher={IEEE}
}

@article{hossain2023hybrid,
  title={Hybrid multiple access for network slicing aware mobile edge computing},
  author={Hossain, Mohammad Arif and Ansari, Nirwan},
  journal={IEEE Transactions on Cloud Computing},
  volume={11},
  number={3},
  pages={2910--2921},
  year={2023},
  publisher={IEEE}
}

@ARTICLE{liu_DT,
  author={Liu, Weiqi and Arif Hossain, Mohammad and Ansari, Nirwan},
  journal={IEEE Transactions on Cognitive Communications and Networking}, 
  title={Mobile-Edge Computing for Multi-Services Digital Twin-Enabled IoT Heterogeneous Networks}, 
  year={2025},
  volume={11},
  number={3},
  pages={1845-1853},
  doi={10.1109/TCCN.2024.3490779}}

@ARTICLE{hossain_metaRL,
  author={Hossain, Mohammad Arif and Liu, Weiqi and Ansari, Nirwan},
  journal={IEEE Internet of Things Journal}, 
  title={Computation-Efficient Offloading and Power Control for MEC in IoT Networks by Meta-Reinforcement Learning}, 
  year={2024},
  volume={11},
  number={9},
  pages={16722-16730},
  doi={10.1109/JIoT.2024.3355023}}

@ARTICLE{Sayed_DDoS,
  author={Sayed, Mahmoud Said El and Le-Khac, Nhien-An and Azer, Marianne A. and Jurcut, Anca D.},
  journal={IEEE Transactions on Cognitive Communications and Networking}, 
  title={A Flow-Based Anomaly Detection Approach With Feature Selection Method Against DDoS Attacks in SDNs}, 
  year={2022},
  volume={8},
  number={4},
  pages={1862-1880},
  doi={10.1109/TCCN.2022.3186331}}

@ARTICLE{Luo_IDS,
  author={Luo, Yiqing and He, Mingshu and Wang, Xiaojuan},
  journal={IEEE Transactions on Cognitive Communications and Networking}, 
  title={ERFS: Efficient Feature Graph Representation for Intrusion Detection Based on Flow Semantic Association}, 
  year={2025},
  volume={11},
  number={6},
  pages={3711-3728},
  doi={10.1109/TCCN.2025.3543357}}

@article{mustapha2023detecting,
  title={Detecting DDoS attacks using adversarial neural network},
  author={Mustapha, Ali and Khatoun, Rida and Zeadally, Sherali and Chbib, Fadlallah and Fadlallah, Ahmad and Fahs, Walid and El Attar, Ali},
  journal={Computers \& Security},
  volume={127},
  pages={103117},
  year={2023},
  publisher={Elsevier}
}

@article{shieh2022detection,
  title={Detection of adversarial DDoS attacks using generative adversarial networks with dual discriminators},
  author={Shieh, Chin-Shiuh and Nguyen, Thanh-Tuan and Lin, Wan-Wei and Huang, Yong-Lin and Horng, Mong-Fong and Lee, Tsair-Fwu and Miu, Denis},
  journal={Symmetry},
  volume={14},
  number={1},
  pages={66},
  year={2022},
  publisher={MDPI}
}

@article{duan2022application,
  title={Application of a dynamic line graph neural network for intrusion detection with semisupervised learning},
  author={Duan, Guanghan and Lv, Hongwu and Wang, Huiqiang and Feng, Guangsheng},
  journal={IEEE Transactions on Information Forensics and Security},
  volume={18},
  pages={699--714},
  year={2022},
  publisher={IEEE}
}

@article{chawla2002smote,
  title={SMOTE: synthetic minority over-sampling technique},
  author={Chawla, Nitesh V and Bowyer, Kevin W and Hall, Lawrence O and Kegelmeyer, W Philip},
  journal={Journal of Artificial Intelligence Research},
  volume={16},
  pages={321--357},
  year={2002}
}

@article{al2025deep,
  title={A deep learning GRU-BiLSTM for DDoS attack detection},
  author={Al-Eryani, Amal M and Omara, Fatma A and Hossny, Eman},
  journal={SN Computer Science},
  volume={6},
  number={6},
  pages={605},
  year={2025},
  publisher={Springer}
}

@article{najar2024cyber,
  title={Cyber-secure SDN: A CNN-based approach for efficient detection and mitigation of DDoS attacks},
  author={Najar, Ashfaq Ahmad and Naik, S Manohar},
  journal={Computers \& Security},
  volume={139},
  pages={103716},
  year={2024},
  publisher={Elsevier}
}

@article{zeng2022improving,
  title={Improving the stability of intrusion detection with causal deep learning},
  author={Zeng, Zengri and Peng, Wei and Zeng, Detian},
  journal={IEEE Transactions on Network and Service Management},
  volume={19},
  number={4},
  pages={4750--4763},
  year={2022},
  publisher={IEEE}
}

@article{qing2024mitigating,
  title={Mitigating data imbalance to improve the generalizability in IoT DDoS detection tasks},
  author={Qing, Yi and Liu, Xiangyu and Du, Yanhui},
  journal={The Journal of Supercomputing},
  volume={80},
  number={7},
  pages={9935--9960},
  year={2024},
  publisher={Springer}
}

@article{cao2021detecting,
  title={Detecting and mitigating DDoS attacks in SDN using spatial-temporal graph convolutional network},
  author={Cao, Yongyi and Jiang, Hao and Deng, Yuchuan and Wu, Jing and Zhou, Pan and Luo, Wei},
  journal={IEEE Transactions on Dependable and Secure Computing},
  volume={19},
  number={6},
  pages={3855--3872},
  year={2021},
  publisher={IEEE}
}

@article{bakar2024ftg,
  title={FTG-Net-E: A hierarchical ensemble graph neural network for DDoS attack detection},
  author={Bakar, Rana Abu and De Marinis, Lorenzo and Cugini, Filippo and Paolucci, Francesco},
  journal={Computer Networks},
  volume={250},
  pages={110508},
  year={2024},
  publisher={Elsevier}
}

@article{jaafar2019review,
  title={Review of recent detection methods for HTTP DDoS attack},
  author={Jaafar, Ghafar A and Abdullah, Shahidan M and Ismail, Saifuladli},
  journal={Journal of Computer Networks and Communications},
  volume={2019},
  number={1},
  pages={1283472},
  year={2019},
  publisher={Wiley Online Library}
}

@inproceedings{li2022graphddos,
  title={Graphddos: Effective ddos attack detection using graph neural networks},
  author={Li, Yuzhen and Li, Renjie and Zhou, Zhou and Guo, Jiang and Yang, Wei and Du, Meijie and Liu, Qingyun},
  booktitle={2022 IEEE 25th International Conference on Computer Supported Cooperative Work in Design (CSCWD)},
  pages={1275--1280},
  year={2022},
  organization={IEEE}
}

@article{rajan2023detection,
  title={Detection and mitigation of {DDoS} attack in {SDN} environment using hybrid {CNN-LSTM}},
  author={Rajan, Dhanya M and Aravindhar, D John},
  journal={Migration Letters},
  volume={20},
  pages={407--419},
  year={2023}
}

@article{el2022flow,
  title={A flow-based anomaly detection approach with feature selection method against ddos attacks in sdns},
  author={El Sayed, Mahmoud Said and Le-Khac, Nhien-An and Azer, Marianne A and Jurcut, Anca D},
  journal={IEEE Transactions on Cognitive Communications and Networking},
  volume={8},
  number={4},
  pages={1862--1880},
  year={2022},
  publisher={IEEE}
}

@article{chouhan2023framework,
  title={A framework to detect DDoS attack in Ryu controller based software defined networks using feature extraction and classification},
  author={Chouhan, Ravindra Kumar and Atulkar, Mithilesh and Nagwani, Naresh Kumar},
  journal={Applied Intelligence},
  volume={53},
  number={4},
  pages={4268--4288},
  year={2023},
  publisher={Springer}
}

@inproceedings{antonakakis2017understanding,
  title={Understanding the mirai botnet},
  author={Antonakakis, Manos and April, Tim and Bailey, Michael and Bernhard, Matt and Bursztein, Elie and Cochran, Jaime and Durumeric, Zakir and Halderman, J Alex and Invernizzi, Luca and Kallitsis, Michalis and others},
  booktitle={26th USENIX security symposium (USENIX Security 17)},
  pages={1093--1110},
  year={2017}
}

@article{zargar2013survey,
  title={A survey of defense mechanisms against distributed denial of service (DDoS) flooding attacks},
  author={Zargar, Saman Taghavi and Joshi, James and Tipper, David},
  journal={IEEE communications surveys \& tutorials},
  volume={15},
  number={4},
  pages={2046--2069},
  year={2013},
  publisher={IEEE}
}

@article{zhao2023research,
  title={Research on data imbalance in intrusion detection using CGAN},
  author={Zhao, Guangyu and Liu, Peng and Sun, Ke and Yang, Yang and Lan, Tianyu and Yang, Han},
  journal={Plos one},
  volume={18},
  number={10},
  pages={e0291750},
  year={2023},
  publisher={Public Library of Science San Francisco, CA USA}
}

@article{zhong2024survey,
  title={A survey on graph neural networks for intrusion detection systems: Methods, trends and challenges},
  author={Zhong, Meihui and Lin, Mingwei and Zhang, Chao and Xu, Zeshui},
  journal={Computers \& Security},
  volume={141},
  pages={103821},
  year={2024},
  publisher={Elsevier}
}

@article{arafah2025anomaly,
  title={Anomaly-based network intrusion detection using denoising autoencoder and Wasserstein GAN synthetic attacks},
  author={Arafah, Mohammad and Phillips, Iain and Adnane, Asma and Hadi, Wael and Alauthman, Mohammad and Al-Banna, Abedal-Kareem},
  journal={Applied Soft Computing},
  volume={168},
  pages={112455},
  year={2025},
  publisher={Elsevier}
}

@article{tian2024vae,
  title={VAE-WACGAN: an improved data augmentation method based on VAEGAN for intrusion detection},
  author={Tian, Wuxin and Shen, Yanping and Guo, Na and Yuan, Jing and Yang, Yanqing},
  journal={Sensors},
  volume={24},
  number={18},
  pages={6035},
  year={2024},
  publisher={MDPI}
}

@article{cai2025ddp,
  title={DDP-DAR: Network intrusion detection based on denoising diffusion probabilistic model and dual-attention residual network},
  author={Cai, Saihua and Zhao, Yingwei and Lyu, Jiaao and Wang, Shengran and Hu, Yikai and Cheng, Mengya and Zhang, Guofeng},
  journal={Neural Networks},
  volume={184},
  pages={107064},
  year={2025},
  publisher={Elsevier}
}

@article{wang2025bs,
  title={BS-GAT: a network intrusion detection system based on graph neural network for edge computing},
  author={Wang, Yalu and Han, Zhijie and Du, Ying and Li, Jie and He, Xin},
  journal={Cybersecurity},
  volume={8},
  number={1},
  pages={27},
  year={2025},
  publisher={Springer}
}

@article{li2021fleam,
  title={FLEAM: A federated learning empowered architecture to mitigate DDoS in industrial IoT},
  author={Li, Jianhua and Lyu, Lingjuan and Liu, Ximeng and Zhang, Xuyun and Lyu, Xixiang},
  journal={IEEE Transactions on Industrial Informatics},
  volume={18},
  number={6},
  pages={4059--4068},
  year={2021},
  publisher={IEEE}
}

@article{he2024efficient,
  title={Efficient Based on Improved Random Forest Defense System Against Application-Layer DDoS Attacks},
  author={He, Junjiang and Fang, Wenbo and Lan, Xiaolong and Yang, Geying and Chen, Ziyu and Chen, Yang and Li, Tao and Chen, Jiangchuan},
  journal={International Journal of Intelligent Systems},
  volume={2024},
  number={1},
  pages={9044391},
  year={2024},
  publisher={Wiley Online Library}
}

@article{defferrard2016convolutional,
  title={Convolutional neural networks on graphs with fast localized spectral filtering},
  author={Defferrard, Micha{\"e}l and Bresson, Xavier and Vandergheynst, Pierre},
  journal={Advances in neural information processing systems},
  volume={29},
  year={2016}
}

@article{goodfellow2014generative,
  title={Generative adversarial nets},
  author={Goodfellow, Ian J and Pouget-Abadie, Jean and Mirza, Mehdi and Xu, Bing and Warde-Farley, David and Ozair, Sherjil and Courville, Aaron and Bengio, Yoshua},
  journal={Advances in neural information processing systems},
  volume={27},
  year={2014}
}

@article{xu2018powerful,
  title={How powerful are graph neural networks?},
  author={Xu, Keyulu and Hu, Weihua and Leskovec, Jure and Jegelka, Stefanie},
  journal={arXiv preprint arXiv:1810.00826},
  year={2018}
}

@article{yang2024improved,
  title={An improved intrusion detection method for {IIoT} using attention mechanisms, BiGRU, and Inception-CNN},
  author={Yang, Kai and Wang, JiaMing and Li, MinJing},
  journal={Scientific Reports},
  volume={14},
  number={1},
  pages={19339},
  year={2024},
  publisher={Nature Publishing Group UK London}
}

@inproceedings{lo2022graphsage,
  title={E-graphsage: A graph neural network based intrusion detection system for IoT},
  author={Lo, Wai Weng and Layeghy, Siamak and Sarhan, Mohanad and Gallagher, Marcus and Portmann, Marius},
  booktitle={NOMS 2022-2022 IEEE/IFIP network operations and management symposium},
  pages={1--9},
  year={2022},
  organization={IEEE}
}

\end{document}